\documentclass{article}
\usepackage{iclr2018_conference,times}

\usepackage{algorithm}
\usepackage[noend]{algorithmic}

\usepackage{hyperref}
\usepackage{booktabs}       %
\usepackage{amsfonts}       %
\usepackage{times}
\usepackage{subcaption}

\bibliographystyle{plainnat}

\usepackage[shortlabels]{enumitem}
\setlist{nolistsep}
\setlist[itemize]{noitemsep, topsep=0pt}

\usepackage{graphicx} %
\usepackage{xcolor}

\usepackage{array}
\usepackage{amssymb}
\usepackage{amsmath}
\usepackage{amsthm}
\usepackage{xspace}

\newtheorem{theorem}{Theorem}
\newtheorem{lemma}{Lemma}

\newtheorem{definition}{Definition}

\newcommand{\mycaptionof}[2]{\captionof{#1}{#2}}

\newcommand{\secref}[1]{$\S$\ref{sec:#1}}

\newcommand{\maxuserweight}{\hat{w}}
\newcommand{\update}{\Delta}
\newcommand{\usersperround}{C}
\newcommand{\eC}{\tilde{C}} %

\newcommand{\numusers}{K}
\newcommand{\numlayers}{m}
\newcommand{\moments}{\mathcal{M}}
\newcommand{\noisescale}{z}
\newcommand{\sensitivity}{\mathbb{S}}

\newcommand{\sample}{\mathcal{C}}

\newcommand{\Wmin}{W_{\text{min}}}
\newcommand{\fixedest}{\tilde{f}_\mathrm{f}} %
\newcommand{\varyest}{\tilde{f}_\mathrm{c}} %
\newcommand{\clipfn}{\text{ClipFn}}

\renewcommand{\mp}{\theta} %
\newcommand{\ltwo}{L_2}

\newcommand{\Num}{\operatorname{Num}}
\newcommand{\Den}{\operatorname{Den}}

\newcolumntype{H}{>{\setbox0=\hbox\bgroup}c<{\egroup}@{}}

\definecolor{darkgreen}{rgb}{0,0.4,0.0}
\definecolor{darkblue}{rgb}{0,0.1,0.3}
\definecolor{darkred}{rgb}{0.7,0.0,0.0}

\definecolor{modelgray}{HTML}{808080}
\definecolor{modelblue}{rgb}{0.2980392156862745, 0.4470588235294118, 0.6901960784313725}
\definecolor{modelred}{rgb}{0.7686274509803922, 0.3058823529411765, 0.3215686274509804}

\newcommand{\noiclr}[1]{}

\newcommand{\mech}{\mathcal{M}}
\newcommand{\database}{d}  %
\newcommand{\privdomain}{\mathcal{D}}
\newcommand{\privrange}{\mathcal{R}}

\newcommand{\clip}[2]{\operatorname{\pi}(#1, #2)}
\newcommand {\norm}[1]{\ensuremath{\| #1 \|}}

\newcommand{\lbs}{\ensuremath{B}}      %

\newcommand{\lepochs}{\ensuremath{E}}  %

\newcommand{\loss}{\ell}
\newcommand{\SUB}[1]{\ENSURE \hspace{-0.15in} \textbf{#1}}

\newcommand{\algfont}[1]{\texttt{#1}}

\newcommand{\fedavglong}{\algfont{FederatedAveraging}\xspace}
\newcommand{\fedavg}{\algfont{FedAvg}\xspace}

\newcommand{\dpfedavg}{\algfont{DP-FedAvg}\xspace}

\newcommand{\fedsgdlong}{\algfont{FederatedSGD}\xspace}
\newcommand{\fedsgd}{\algfont{FedSGD}\xspace}

\newcommand{\dpfedsgd}{\algfont{DP-FedSGD}\xspace}

\newcommand{\eqdef}{\overset{\text{def}}{=}}
\newcommand{\T}{\rule{0pt}{2.2ex}}

\newlength{\pw}

\newcommand{\R}{\ensuremath{\mathbb{R}}}

\DeclareMathOperator*{\E}{\mathbb{E}}
\newcommand{\grad}{\triangledown}

\newcommand{\eps}{\epsilon}
\newcommand{\Normal}{\mathcal{N}}

\iclrfinalcopy

\title{Learning Differentially Private Recurrent Language Models}
\author{
 H. Brendan McMahan \\
 \texttt{mcmahan@google.com} \\
 \And
 Daniel Ramage\\
 \texttt{dramage@google.com}\\
 \And
 Kunal Talwar\\
 \texttt{kunal@google.com}\\
 \And
 Li Zhang\\
 \texttt{liqzhang@google.com}\\
}

\begin{document}

\maketitle

\begin{abstract}
We demonstrate that it is possible to train large recurrent language models with user-level differential privacy guarantees with only a negligible cost in predictive accuracy.  Our work builds on recent advances in the training of deep networks on user-partitioned data and privacy accounting for stochastic gradient descent. In particular, we add user-level privacy protection to the federated averaging algorithm, which makes ``large step'' updates from user-level data. Our work demonstrates that given a dataset with a sufficiently large number of users (a requirement easily met by even small internet-scale datasets), achieving differential privacy comes at the cost of increased computation, rather than in decreased utility as in most prior work. We find that our private LSTM language models are quantitatively and qualitatively similar to un-noised models when trained on a large dataset.
\end{abstract}

\section{Introduction}

Deep recurrent models like long short-term memory (LSTM) recurrent
neural networks (RNNs) have become a standard building block in modern
approaches to language modeling, with applications in speech
recognition, input decoding for mobile keyboards, and language
translation. Because language usage varies widely by problem domain
and dataset, training a language model on data from the right
distribution is critical. For example, a model to aid typing on a
mobile keyboard is better served by training data typed in mobile apps
rather than from scanned books or transcribed utterances.
However, language data can be uniquely privacy sensitive. In the case
of text typed on a mobile phone, this sensitive information might
include passwords, text messages, and search queries. In general,
language data may identify a speaker---explicitly by name or
implicitly, for example via a rare or unique phrase---and link that
speaker to secret or sensitive information.  

Ideally, a language model's parameters would encode patterns of
language use common to many users without memorizing any individual
user's unique input sequences. However, we know convolutional NNs can
memorize arbitrary labelings of the training
data~\citep{zhang17understanding} and  recurrent language
models are also capable of memorizing unique patterns in the
training data~\citep{carlini18secretsharer}.  Recent attacks on neural networks such as those of
\citet{ShokriSSS17} underscore the implicit risk. The main goal of
our work is to provide a strong guarantee that the trained model
protects the privacy of individuals' data without undue sacrifice in
model quality.
\noiclr{  building upon recent work in applying differential
privacy to machine learning. (e.g.,
  \citep{dwork14book,DworkLei09,KasiviswanathanLNRS11,chaudhuri11dperm,
    BeimelBKN14,bassily14focs,McSherryMironov09,SongCS13,
    ShokriShmatikov15, abadi16dpdl,PapernotAEGT17,WLKCJN17}.}

We are motivated by the problem of training models for next-word
prediction in a mobile keyboard, and use this as a running
example. This problem is well suited to the techniques we introduce,
as differential privacy may allow for training on data from the true
distribution (actual mobile usage) rather than on proxy data from some
other source that would produce inferior models. However, to
facilitate reproducibility and comparison to non-private models, our
experiments are conducted on a public dataset as is standard in
differential privacy research.
The remainder of this paper is structured around the following
contributions:

1. We apply differential privacy to model training using the notion of
\textit{user-adjacent} datasets, leading to formal guarantees of
user-level privacy, rather than privacy for single examples.

2. We introduce a noised version of the federated averaging algorithm
\citep{mcmahan16fedavg} in \secref{mechanism}, which satisfies
user-adjacent differential privacy via use of the moments accountant
\citep{abadi16dpdl} first developed to analyze differentially private
stochastic gradient descent (SGD) for example-level privacy. The
federated averaging approach groups multiple SGD updates together,
enabling large-step model updates.

  3. We demonstrate the first high quality LSTM language model trained
  with strong privacy guarantees in \secref{experiments}, showing no
  significant decrease in model accuracy given a large enough dataset.
  For example, on a dataset of 763,430 users, baseline (non-private)
  training achieves an accuracy of $17.5\%$ in 4120 rounds of
  training, where we use the data from 100 random users on each
  round. We achieve this same level of accuracy with $(4.6,
  10^{-9})$-differential privacy in 4980 rounds, processing on average
  5000 users per round---maintaining the same level of accuracy at a
  significant computational cost of roughly $60\times$.\footnote{The
    additional computational cost could be mitigated by initializing
    by training on a public dataset, rather than starting from random
    initialization as we do in our experiments.} Running the same
  computation on a larger dataset with $10^8$ users would improve the
  privacy guarantee to $(1.2, 10^{-9})$.
  We guarantee privacy and maintain utility despite the complex
  internal structure of the LSTM---with per-word embeddings as well as
  dense state transitions---by using the federated averaging
  algorithm.  We demonstrate that the noised model's metrics and
  qualitative behavior (with respect to head words) does not differ
  significantly from the non-private model.  To our knowledge, our
  work represents the most sophisticated machine learning model,
  judged by the size and the complexity of the model, ever trained
  with privacy guarantees, and the first such model trained with
  user-level privacy.

  4. In extensive experiments in \secref{experiments}, we
  offer guidelines for parameter tuning when training complex models with
  differential privacy guarantees. We show that a small number of
  experiments can narrow the parameter space into a regime where we
  pay for privacy not in terms of a loss in utility but in terms of an
  increased computational cost.

We now introduce a few preliminaries. Differential
privacy (DP)~\citep{dwork06calibrating, dwork11founation, dwork14book}
provides a well-tested formalization for the release of information
derived from private data.  Applied to machine learning, a
differentially private training mechanism allows the public release of
model parameters with a strong guarantee: adversaries are severely
limited in what they can learn about the original training data based
on analyzing the parameters, even when they have access to
arbitrary side information. Formally, it says:
\begin{definition}Differential Privacy:
  A randomized mechanism $\mech \colon \privdomain
  \rightarrow \privrange$ with a domain $\privdomain$ (e.g., possible
  training datasets) and range $\privrange$ (e.g., all possible
  trained models) satisfies $(\eps,\delta)$-differential privacy if
  for any two \textbf{adjacent} datasets $\database, \database'\in \privdomain$
  and for any subset of outputs $S \subseteq \privrange$ it holds that
  $
    \Pr[\mech(\database)\in S]\leq e^{\eps}\Pr[\mech(\database') \in S] + \delta.
  $
\end{definition}

The definition above leaves open the definition of \textbf{adjacent
  datasets} which will depend on the application. Most prior work on
differentially private machine learning
(e.g.~\citet{chaudhuri11dperm,bassily14focs,abadi16dpdl, WLKCJN17,
  PapernotAEGT17}) deals with {\em example-level privacy}: two
datasets $\database$ and $\database'$ are defined to be adjacent if
$\database'$ can be formed by adding or removing a single training
example from $\database$. We remark that while the recent {\sc PATE}
approach of \citep{PapernotAEGT17} can be adapted to give user-level
privacy, it is not suited for a language model where the number of
classes (possible output words) is large.

For problems like language modeling, protecting individual
examples is insufficient---each typed word makes its own contribution
to the RNN's training objective, so one user may contribute many
thousands of examples to the training data. A sensitive word or phrase
may be typed several times by an individual user, but it should still
be protected.\footnote{Differential privacy satisfies a property known
  as {\em group privacy} that can allow translation from example-level
  privacy to user-level privacy at the cost of an increased $\eps$. In
  our setting, such a blackbox approach would incur a prohibitive
  privacy cost. This forces us to directly address user-level
  privacy.}  In this work, we therefore apply the definition of
differential privacy to protect whole user histories in the training
set. This {\em user-level privacy} is ensured by using an appropriate
adjacency relation:
\begin{definition}User-adjacent datasets: Let $\database$ and
  $\database'$ be two datasets of training examples, where each
  example is associated with a user. Then, $\database$ and
  $\database'$ are \textbf{adjacent} if $\database'$ can be formed by
  adding or removing all of the examples associated with a single user
  from $\database$.
\end{definition}

Model training that satisfies differential privacy with respect to datasets
that are user-adjacent satisfies the intuitive notion
of privacy we aim to protect for language modeling: the presence or
absence of any specific user's data in the training set has an
imperceptible impact on the (distribution over) the parameters of the
learned model. It follows that an adversary looking at the trained
model cannot infer whether any specific user's data was used in the
training, irrespective of what auxiliary information they may have. In
particular, differential privacy rules out memorization of sensitive
information in a strong information theoretic sense.

\section{Algorithms for user-level differentially private training}
\label{sec:mechanism}

Our private algorithm relies heavily on two prior works: the
\fedavglong (or \fedavg) algorithm of \citet{mcmahan16fedavg}, which
trains deep networks on user-partitioned data, and the moments accountant
of \citet{abadi16dpdl}, which provides tight composition guarantees
for the repeated application of the Gaussian mechanism combined with
amplification-via-sampling. While we have attempted to make the
current work as self-contained as possible, the above references
provide useful background.

\fedavg was introduced by ~\citet{mcmahan16fedavg} for federated
learning, where the goal is to train a shared model while leaving the
training data on each user's mobile device. Instead, devices download
the current model and compute an update by performing local
computation on their dataset. It is worthwhile to perform extra
computation on each user's data to minimize the number of
communication rounds required to train a model, due to the
significantly limited bandwidth when training data remains
decentralized on mobile devices. We observe, however, that \fedavg is
of interest even in the datacenter when DP is applied: larger updates
are more resistant to noise, and fewer rounds of training can imply
less privacy cost. Most importantly, the algorithm naturally forms
per-user updates based on a single user's data, and these updates are
then averaged to compute the final update applied to the shared model
on each round. As we will see, this structure makes it possible to
extend the algorithm to provide a user-level differential privacy
guarantee.

We also evaluate the \fedsgdlong algorithm, essentially large-batch SGD
where each minibatch is composed of ``microbatches'' that include data
from a single distinct user.  In some datacenter applications \fedsgd
might be preferable to \fedavg, since fast networks make it more
practical to run more iterations. However, those additional iterations
come at a privacy cost. Further, the privacy benefits of federated
learning are nicely complementary to those of differential privacy,
and \fedavg can be applied in the datacenter as well, so we focus on
this algorithm while showing that our results also extend to \fedsgd. 

Both \fedavg and \fedsgd are iterative procedures, and in both cases we make the
following modifications to the non-private versions in order to achieve differential privacy:
\begin{enumerate}[A)]
\item We use random-sized batches where we select users independently
  with probability $q$, rather than always selecting a fixed number of
  users.
\item We enforce clipping of per-user updates so the total update has
  bounded $\ltwo$ norm.
\item  We use different estimators for the average update (introduced next).
\item We add Gaussian noise to the final average update.
\end{enumerate}
The pseudocode for \dpfedavg and \dpfedsgd is given as
Algorithm~\ref{alg:fedavg}.  \noiclr{Each of the changes above has the potential
to influence the convergence behavior of the algorithm, and so we
investigate them individually and together in
\secref{experiments}.}
In the remainder of this section, we introduce estimators for C) and
then different clipping strategies for B). Adding the sampling
procedure from A) and noise added in D) allows us to apply the moments
accountant to bound the total privacy loss of the algorithm, given in
Theorem~\ref{thm:privacy_guarantee}.  Finally, we consider the properties of the
moments accountant that make training on large datasets particular
attractive.

\newcommand{\spc}{$\quad$}
\newcommand{\comment}[1]{$\ $ // \emph{#1}}
\begin{figure}[t]
\begin{small}
\centering
\rule{\textwidth}{0.4pt}
\begin{minipage}[t]{0.49\textwidth} 
\begin{center}
\begin{algorithmic}
\SUB{Main training loop:}
   \STATE \emph{parameters}
   \STATE \spc user selection probability $q \in (0, 1]$
   \STATE \spc per-user example cap $\maxuserweight \in \R^+$
   \STATE \spc noise scale $z \in \R^+$
   \STATE \spc estimator $\fixedest$, or $\varyest$ with param $\Wmin$
   \STATE \spc UserUpdate (for FedAvg or FedSGD)
   \STATE \spc ClipFn (FlatClip or PerLayerClip)
   \STATE
   \STATE Initialize model $\mp^0$, moments accountant $\moments$
   \STATE $w_k = \min\!\left(\frac{n_k}{\maxuserweight}, 1\right)$ for all users $k$
   \STATE $W = \sum_{k \in d} w_k$ 
   \FOR{each round $t = 0, 1, 2, \dots$}
     \STATE $\sample^t \leftarrow$ (sample users with probability $q$)
     \FOR{each user $k \in \sample^t$ \textbf{in parallel}}
       \STATE $\update^{t+1}_k \leftarrow \text{UserUpdate}(k, \mp^t, \clipfn)$ 
     \ENDFOR
     \STATE $\update^{t+1} = 
        \begin{cases} 
          \frac{\sum_{k\in \sample^t} w_k \update_k}{qW} 
              & \text{for } \fixedest \\[4pt]
          \frac{\sum_{k \in \sample^t} w_k \update_k}{\max(q\Wmin,\sum_{k\in \sample^t} w_k)}
              & \text{for } \varyest
        \end{cases}$
     \STATE $S \leftarrow $ (bound on $\norm{\update_k}$ for \clipfn)
     \STATE $\sigma \leftarrow \left\{\text{$\frac{z S}{qW}$ for $\fixedest$ or $\frac{2zS}{q\Wmin}$ for $\varyest$} \right\}$
     \STATE $\mp^{t+1} \leftarrow \mp^t + \update^{t+1} + \Normal(0, I\sigma^2)$
     \STATE $\moments$.\texttt{accum\_priv\_spending}($z$)
   \ENDFOR
   \STATE print $\moments$.\texttt{get\_privacy\_spent}$()$
\end{algorithmic}
\end{center}
\vfill
\end{minipage}
\hfill
\begin{minipage}[t]{0.49\textwidth}
\begin{center}
\begin{algorithmic}
 \SUB{FlatClip$(\update)$:}
   \STATE \emph{parameter} $S$
   \STATE return $\clip{\update}{S}$ \comment{See Eq.~\eqref{eq:clipdef}.}

 \STATE
 \SUB{PerLayerClip$(\update)$}:
   \STATE \emph{parameters} $S_1, \dots S_\numlayers$ 
   \STATE $S = \sqrt{\sum_j S_j^2}$
   \FOR{each layer $j \in \{1, \dots, \numlayers\}$}
     \STATE $\update'(j) = \clip{\update(j)}{S_j}$
   \ENDFOR
   \STATE return $\update'$

 \STATE
 \SUB{UserUpdateFedAvg($k, \mp^0$ {\normalfont, ClipFn}):}
  \STATE \emph{parameters} $\lbs$, $\lepochs$, $\eta$
  \STATE $\mp \leftarrow \mp^0$
  \FOR{each local epoch $i$ from $1$ to $\lepochs$}
    \STATE $\mathcal{B} \leftarrow$ ($k$'s data split into size $\lbs$ batches)
    \FOR{batch $b \in \mathcal{B}$}
      \STATE $\mp \leftarrow \mp - \eta \grad \loss(\mp; b)$
      \STATE $\mp \leftarrow \mp^0 + \clipfn(\mp - \mp^0)$
    \ENDFOR
 \ENDFOR
 \STATE return update $\update_k = \mp - \mp^0$ \comment{Already clipped.}

 \STATE
 \SUB{UserUpdateFedSGD($k, \mp^0$ {\normalfont, ClipFn}):}
  \STATE \emph{parameters} $\lbs$, $\eta$
  \STATE select a batch $b$ of size $\lbs$ from $k$'s examples
  \STATE return update $\update_k = \clipfn(-\eta \grad \loss(\mp; b))$ 
\end{algorithmic}
\end{center}
\vfill
\end{minipage}
\end{small}

\rule{\textwidth}{0.4pt} 
\mycaptionof{algorithm}{The main loop for \dpfedavg and \dpfedsgd,
  the only difference being in the user update function
  (UserUpdateFedAvg or UserUpdateFedSGD).  The calls on the
  moments accountant $\moments$ refer to the API of
  \citet{abadi16dpdl-code}. \noiclr{The user update functions are
  parameterized by a clipping strategy and corresponding parameters,
  either FlatClip or PerLayerClip. Both use a local learning rate
  $\eta$ and a batch size $\lbs$, though typically $\lbs$ will be set
  much larger for FedSGD (for example, computing the gradient on the
  complete user dataset).}}\label{alg:fedavg}

\end{figure}

\paragraph{Bounded-sensitivity estimators for weighted average
  queries}
\label{sec:estimators}
Randomly sampling users (or training examples) by selecting each
independently with probability $q$ is crucial for proving low privacy
loss through the use of the moments
accountant~\citep{abadi16dpdl}. However, this procedure produces
variable-sized samples $\sample$, and when the quantity to be
estimated $f(\sample)$ is an average rather than a sum (as in computing the
weighted average update in \fedavg or the average loss on a minibatch
in SGD with example-level DP), this has ramifications for the
sensitivity of the query $f$.

Specifically, we consider weighted databases $d$ where each row $k \in
d$ is associated with a particular user, and has an associated weight
$w_k \in [0,1]$. This weight captures the desired influence of the row
on the final outcome. For example, we might think of row $k$
containing $n_k$ different training examples all generated by user
$k$, with weight $w_k$ proportional to $n_k$. We are then interested
in a bounded-sensitivity estimate of $f(\sample) = \frac{\sum_{k \in
    \sample} w_k \update_k}{\sum_{k\in \sample} w_k}$ for per-user
vectors $\update_k$, for example to estimate the weighted-average user
update in \fedavg. Let $W = \sum_k w_k$. We consider two such
estimators:
\begin{equation*}\label{eq:estimator}
  \fixedest(\sample) =\frac{\sum_{k\in \sample} w_k
    \update_k}{qW},
  \qquad \text{and} \qquad
\varyest(\sample) = \frac{\sum_{k \in \sample} w_k \update_k}{\max(q\Wmin,\sum_{k\in \sample} w_k)}.
\end{equation*}
Note $\fixedest$ is an unbiased estimator, since $\E[\sum_{k \in
  \sample} w_k] = qW$. On the other hand, $\varyest$ matches $f$
exactly as long as we have sufficient weight in the sample.  For
privacy protection, we need to control the sensitivity of our query
function $\tilde{f}$, defined as $\sensitivity(\tilde{f}) = \max_{\sample, k} \|
\tilde{f}(\sample\cup \{k\}) - \tilde{f}(\sample) \|_2$, where the added
user $k$ can have arbitrary data. The lower-bound $q\Wmin$ on the
denominator of $\varyest$ is necessary to control
sensitivity. Assuming each $w_k\Delta_k$ has bounded norm, we have:
\begin{lemma}\label{lem:sensitivity}
  If for all users $k$ we have $\|w_k \update_k\|_2\leq S$, then the
  sensitivity of the two estimators is bounded as
  $\sensitivity(\fixedest) \le \frac{S}{qW}$ and
  $\sensitivity(\varyest) \le \frac{2S}{q\Wmin}$.
\end{lemma}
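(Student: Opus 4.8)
The plan is to bound each estimator's sensitivity by tracking how the estimator changes when a single user $k$ (with arbitrary data) is inserted into the sample $\sample$. For $\fixedest$ the argument is immediate: its denominator $qW$ does not depend on $\sample$, so $\fixedest(\sample\cup\{k\}) - \fixedest(\sample) = w_k\update_k/(qW)$, and the hypothesis $\norm{w_k\update_k}\le S$ gives $\sensitivity(\fixedest)\le S/(qW)$ with no further work. The entire difficulty lies in $\varyest$, whose denominator $\max(q\Wmin,\sum_{j\in\sample}w_j)$ \emph{also} moves when $k$ is added, so the change is not simply proportional to $w_k\update_k$.

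For $\varyest$ I would write the numerator as $N=\sum_{j\in\sample}w_j\update_j$ and the clipped denominator as $D=\max(q\Wmin,\sum_{j\in\sample}w_j)$, so that adding $k$ replaces them by $N'=N+w_k\update_k$ and $D'=\max(q\Wmin,\sum_{j\in\sample}w_j+w_k)$. The key move is to split the change into a numerator part and a denominator part by inserting $N/D'$:
\[
\varyest(\sample\cup\{k\}) - \varyest(\sample) = \frac{w_k\update_k}{D'} + N\Bigl(\frac1{D'}-\frac1D\Bigr).
\]
The first term is controlled exactly as for $\fixedest$: since $D'\ge q\Wmin$ and $\norm{w_k\update_k}\le S$, its norm is at most $S/(q\Wmin)$.

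For the second, denominator-perturbation term I would use two observations. First, $x\mapsto\max(q\Wmin,x)$ is $1$-Lipschitz and the weight sum rises by $w_k\le1$, so $0\le D'-D\le w_k$ and hence $\bigl|1/D'-1/D\bigr| = (D'-D)/(DD')\le w_k/(DD')$. Second—and this is the crux—the quantity $\norm{N}/D = \norm{\varyest(\sample)}$ is itself at most $S$, because $\varyest(\sample)$ is a weighted (sub-)average of the per-user updates, each clipped to norm at most $S$; here the floor $q\Wmin$ in the denominator is precisely what prevents this average from blowing up. Combining, the second term has norm at most $(\norm{N}/D)\cdot(w_k/D')\le S\cdot w_k/(q\Wmin)\le S/(q\Wmin)$, using $D'\ge q\Wmin$ and $w_k\le1$. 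Adding the two pieces yields $\sensitivity(\varyest)\le 2S/(q\Wmin)$.

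The main obstacle is this denominator-perturbation term. A naive triangle-inequality bound $\norm{N}\le\sum_j\norm{w_j\update_j}$ scales with the number of sampled users and is useless, so the argument must instead control $\norm{\varyest(\sample)}$ directly. Doing so requires the stronger per-user fact that each clipped update $\update_j$ (not merely the product $w_j\update_j$) has norm at most $S$: with only $\norm{w_j\update_j}\le S$ one can make the average term arbitrarily large by using many users of tiny weight and large update, so invoking the clipping bound on $\update_j$ together with the $q\Wmin$ floor is essential. I would also briefly verify the boundary regimes—both denominators pinned at $q\Wmin$, versus both equal to the weight sum—to confirm the Lipschitz estimate $D'-D\le w_k$ holds uniformly.
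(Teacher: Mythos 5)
Your proof follows essentially the same route as the paper's: the same decomposition of $\varyest(\sample\cup\{k\})-\varyest(\sample)$ into a numerator-change term $w_k\update_k/D'$ and a denominator-perturbation term $N\left(\frac{1}{D'}-\frac{1}{D}\right)$, each bounded by $\frac{S}{q\Wmin}$ using $D'\ge q\Wmin$, the fact that adding a user moves the denominator by at most $w_k\le 1$, and the bound $\norm{\varyest(\sample)}\le S$; the treatment of $\fixedest$ is identical as well.

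One point in your write-up is actually sharper than the paper's own proof and deserves emphasis. The paper concludes by asserting ``we used the fact that $\norm{\varyest(\sample)}\le S$'' without justification, and, as you correctly observe, that fact does \emph{not} follow from the lemma's stated hypothesis $\norm{w_j\update_j}\le S$ alone: a sample of many users with tiny weights $w_j$ and correspondingly large updates $\update_j$ makes $\norm{N}$ grow linearly in the sample size while the denominator stays pinned at $q\Wmin$, so $\norm{N}/D$ can be made arbitrarily large. The step genuinely requires the stronger per-user guarantee $\norm{\update_j}\le S$ together with $w_j\le 1$, which gives $\norm{N}\le S\sum_j w_j\le S\,D$ and hence $\norm{\varyest(\sample)}\le S$; this is exactly what the algorithm's clipping provides. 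In short, your proof is the paper's proof plus the missing justification for its final step, and it correctly identifies that the lemma's hypothesis should be read as ``updates clipped to norm $S$ and weights in $[0,1]$'' rather than merely ``$\norm{w_k\update_k}\le S$.''
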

A proof is given in Appendix~\secref{proofs}.

\paragraph{Clipping strategies for multi-layer models}
\label{sec:clipping}
Unfortunately, when the user vectors $\update_k$ are gradients (or sums
of gradients) from a neural network, we will generally have no a
priori bound\footnote{To control sensitivity,
  Lemma~\ref{lem:sensitivity} only requires that $\norm{w_k
    \update_k}$ is bounded. For simplicity, we only apply clipping to
  the updates $\update_k$, using the fact $w_k \le 1$, leaving as
  future work the investigation of weight-aware clipping schemes.}
$S$ such that $\norm{\update_k} \le S$. Thus, we will need to ``clip''
our updates to enforce such a bound before applying $\fixedest$ or
$\varyest$. For a single vector $\Delta$, we can apply a simple
$\ltwo$ projection when necessary:
\begin{equation}\label{eq:clipdef}
\clip{\update}{S} \eqdef \update \cdot \min\left(1, \frac{S}{\norm{\update}}\right).
\end{equation}
However, for deep networks it is more natural to treat the parameters
of each layer as a separate vector. The updates to each of these
layers could have vastly different $\ltwo$ norms, and so it can be
preferable to clip each layer separately.

Formally, suppose each update $\update_k$ contains $\numlayers$
vectors $\update_k = (\update_k(1), \dots, \update_k(\numlayers))$. We
consider the following clipping strategies, both of which ensure the
total update has norm at most $S$:
\begin{enumerate}
\item \textbf{Flat clipping} Given an overall clipping parameter $S$,
  we clip the concatenation of all the layers as $\update'_k =
  \clip{\update_k}{S}$.
\item \textbf{Per-layer clipping} Given a per-layer clipping parameter
  $S_j$ for each layer, we set $\update_k'(j) =
  \clip{\update_k(j)}{S_j}$. Let $S = \sqrt{\sum_{j=1}^m S_j^2}$. The
  simplest model-independent choice is to take $S_j =
  \frac{S}{\sqrt{m}}$ for all $j$, which we use in
  experiments.
\end{enumerate}

We remark here that clipping itself leads to additional bias, and
ideally, we would choose the clipping parameter to be large enough
that nearly all updates are smaller than the clip value. On the other
hand, a larger $S$ will require more noise in order to achieve
privacy, potentially slowing training. We treat $S$ as a
hyper-parameter and tune it.  
\noiclr{In our implementation of \dpfedavg, we apply ``greedy''
  clipping after each local update, following a standard approach for
  online gradient descent (see e.g.~\citet{zinkevich03giga}).}

\paragraph{A privacy guarantee}
\label{sec:guarantee}

Once the sensitivity of the chosen estimator is bounded, we may add
Gaussian noise scaled to this sensitivity to obtain a privacy
guarantee. A simple approach is to use an $(\eps,\delta)$-DP bound for
this Gaussian mechanism, and apply the privacy amplification lemma and
the advanced composition theorem to get a bound on the total privacy
cost. We instead use the Moments Accountant of~\citet{abadi16dpdl} to
achieve much tighter privacy bounds. The moments accountant for the
sampled Gaussian mechanism upper bounds the total privacy cost of $T$
steps of the Gaussian mechanism with noise $N(0,\sigma^2)$ for $\sigma
= \noisescale \cdot \sensitivity$, where $\noisescale$ is a parameter,
$\sensitivity$ is the sensitivity of the query, and each row is
selected with probability $q$. Given a $\delta > 0$, the accountant
gives an $\eps$ for which this mechanism satisfies $(\eps,\delta)$-DP.
The following theorem is a slight generalization of the results
in~\cite{abadi16dpdl}; see \secref{proofs} for a proof sketch.
\begin{theorem}\label{thm:privacy_guarantee}
  For the estimator ($\fixedest$,$\varyest$), the moments accountant
  of the sampled Gaussian mechanism correctly computes the privacy
  loss with the noise scale of $\noisescale = \sigma/\sensitivity$ and
  steps $T$, where $\sensitivity=S/qW$ for ($\fixedest$) and
  $2S/q\Wmin$ for ($\varyest$).
\end{theorem}

\paragraph{Differential privacy for large datasets}
\label{sec:large}
We use the implementation of the moments accountant from
\citet{abadi16dpdl-code}. The moments accountant makes strong use of
amplification via sampling, which means increasing dataset size makes
achieving high levels of privacy significantly
easier. Table~\ref{table:moments} summarizes the privacy guarantees
offered as we vary some of the key parameters.
\begin{table}[t!]
\begin{center}
  \caption{ Privacy for different total numbers of users $\numusers$
    (all with equal weight), expected number of users sampled per
    round $\eC$, and the number of rounds of training. For each row,
    we set $\delta=\frac{1}{\numusers^{1.1}}$ and report the value of
    $\eps$ for which $(\eps, \delta)$-differential privacy holds after
    $1$ to $10^6$ rounds. For large datasets, additional
    rounds of training incur only a minimal additional privacy loss.
  }
\label{table:moments}
\vspace{4pt}
\begin{small}
\begin{tabular}{|rrr|rrrrrrr|}
\hline
users         & sample                & noise       & 
\multicolumn{7}{c|}{\T Upper bound on privacy $\eps$ after $1, 10, \dots 10^6$ rounds} \\
$K$   & $\eC$               & $\noisescale$ & $10^0$ & $10^1$ & $10^2$ & $10^3$ & $10^4$ & $10^5$ & $10^6$ \\
\hline
\T$10^5$ & $10^2$ & 1.0 &    0.97 &  0.98 &  1.00 &  1.07 &  1.18 &  2.21 &  7.50 \\
$10^6$ & $10^1$ & 1.0 &    0.68 &  0.69 &  0.69 &  0.69 &  0.69 &  0.72 &  0.73 \\
$10^6$ & $10^3$ & 1.0 &    1.17 &  1.17 &  1.20 &  1.28 &  1.39 &  2.44 &  8.13 \\
$10^6$ & $10^4$ & 1.0 &    1.73 &  1.92 &  2.08 &  3.06 &  8.49 & 32.38 & 187.01 \\
$10^6$ & $10^3$ & 3.0 &    0.47 &  0.47 &  0.48 &  0.48 &  0.49 &  0.67 &  1.95 \\
$10^9$ & $10^3$ & 1.0 &    0.84 &  0.84 &  0.84 &  0.85 &  0.88 &  0.88 &  0.88 \\
\hline
\end{tabular}
\end{small}
\vspace{0.1in}
\end{center}
\end{table}
The takeaway from this table is that as long as we can afford the cost
in utility of adding noise proportional to $\noisescale$ times the
sensitivity of the updates, we can get reasonable privacy guarantees
over a large range of parameters. The size of the dataset has a modest
impact on the privacy cost of a single query (1 round column), but a
large effect on the number of queries that can be run without
significantly increasing the privacy cost (compare the $10^6$ round
column). For example, on a dataset with $10^9$ users, the privacy
upper bound is nearly constant between $1$ and $10^6$ calls to the
mechanism (that is, rounds of the optimization algorithm).

There is only a small cost in privacy for increasing the expected
number of (equally weighted) users $\eC = qW = q\numusers$ selected on each round
as long as $\eC$ remains a small fraction of the size of the total
dataset. Since the sensitivity of an average query decreases like
$1/\eC$ (and hence the amount of noise we need to add
decreases proportionally), we can increase $\eC$ until we
arrive at a noise level that does not adversely effect the
optimization process. We show empirically that such a level exists in
the experiments.

\section{Experimental Results}
\label{sec:experiments}

In this section, we evaluate \dpfedavg while training an LSTM RNN
tuned for language modeling in a mobile keyboard. We vary noise,
clipping, and the number of users per round to develop an intuition of
how privacy affects model quality in practice. 

We defer our experimental results on \fedsgd as well as on models with
larger dictionaries to Appendix~\secref{additional}. To summarize,
they show that \fedavg gives better privacy-utility trade-offs than
\fedsgd, and that our empirical conclusions extend to larger
dictionaries with relatively little need for additional parameter
tuning despite the significantly larger models. Some less important
plots are deferred to \secref{moreplots}.

\newcommand{\model}[2]{\textbf{\textcolor{#1}{#2}}}
\setlength{\pw}{2.8in}
\begin{figure}
\begin{minipage}[t]{.48\textwidth}
  \begin{center}
    \includegraphics[width=\pw]{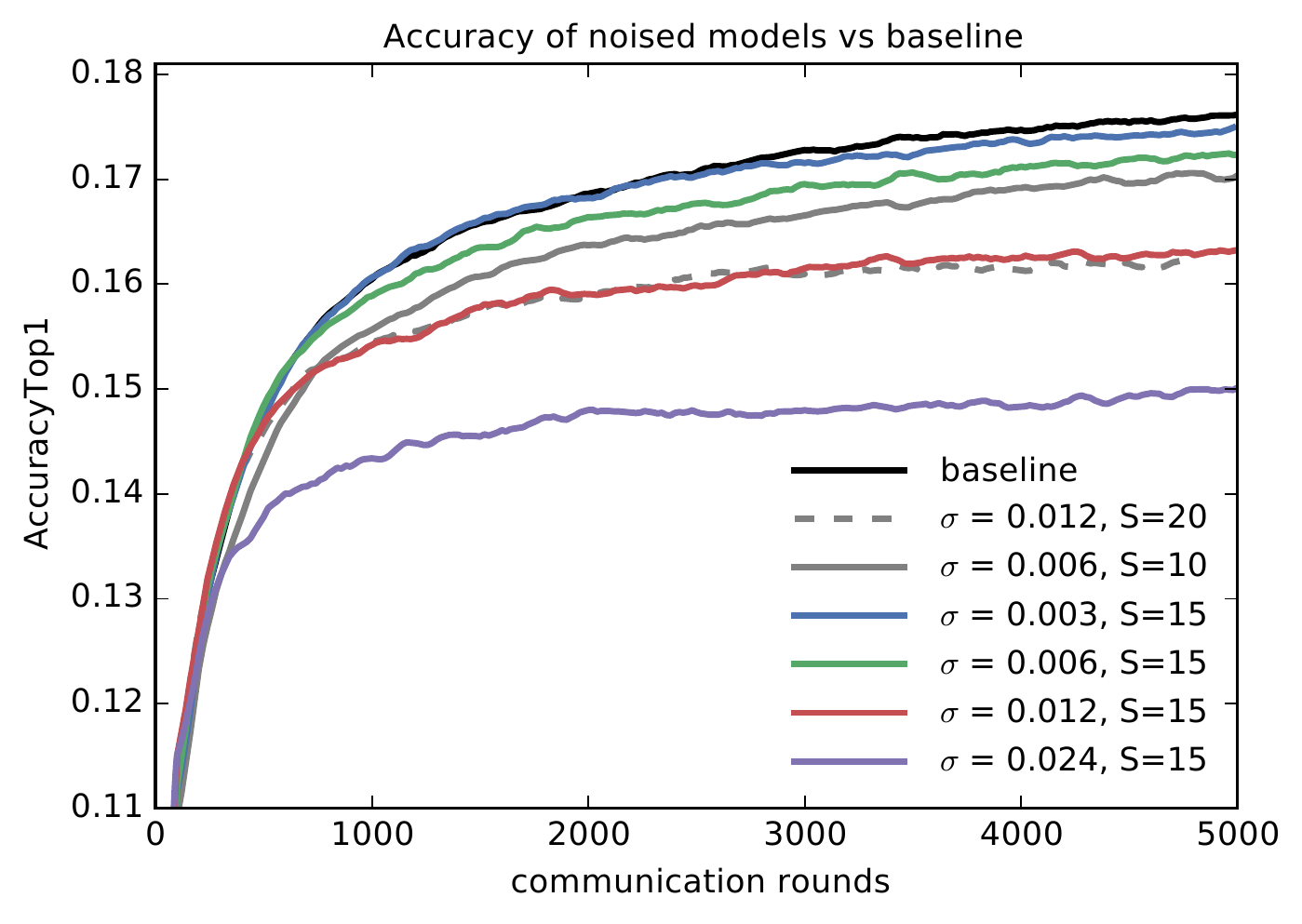} \\
    \mycaptionof{figure}{Noised training versus the non-private baseline.
      \noiclr{ of Figure~\ref{fig:estimators_and_sampling}.}
      The model with $\sigma=0.003$ nearly matches the baseline.}
    \label{fig:noised_vs_baseline}
  \end{center}
\end{minipage}
\hfill
\begin{minipage}[t]{.49\textwidth}
\begin{center}
  \mycaptionof{table}{Privacy ($\eps$ at $\delta\!=\!10^{-9}$) and
    accuracy after 5000 rounds of training for models with different
    $\sigma$ and $S$ from Figure~\ref{fig:noised_vs_baseline}. The
    $\eps$'s are strict upper bounds on the true privacy loss given
    the dataset size $\numusers$ and $\eC$; AccuracyTop1 (AccT1) is
    estimated from a model trained with the same $\sigma$ as discussed
    in the text.}\label{table:dpaccuracy}
\begin{small}
\begin{tabular}{|rr|rr|rr|}
\hline
\multicolumn{2}{|c|}{model} & \multicolumn{2}{c|}{data}    &  & \\
$\sigma$ & $S$              & users $\numusers$ & $\eC$  & $\eps$  &  AccT1\\
\hline
\model{black}{0.000} & \model{black}{$\infty$} & 763430 & 100 & $\infty$  & 17.62\% \\
\model{modelblue}{0.003} & \model{modelblue}{15} & 763430 & 5000 & 4.634 & 17.49\% \\
\model{modelgray}{0.006} & \model{modelgray}{10} & 763430 & 1667 & 2.314 & 17.04\% \\
\model{modelred}{0.012} & \model{modelred}{15} & 763430 & 1250 & 2.038 & 16.33\% \\
\model{modelblue}{0.003} & \model{modelblue}{15} &$10^8$ & 5000 & 1.152 & 17.49\% \\
\model{modelgray}{0.006} & \model{modelgray}{10} &$10^8$ & 1667 & 0.991 & 17.04\% \\
\model{modelred}{0.012} & \model{modelred}{15} &$10^8$ & 1250 & 0.987 & 16.33\% \\
\hline
\end{tabular}
\end{small}
\end{center}
\end{minipage}
\end{figure}

\paragraph{Model structure}

\newcommand{\word}{s}
The goal of a language model is to predict the next word in a sequence
$\word_t$ from the preceding words $\word_0 ... \word_{t-1}$. The
neural language model architecture used here is a variant of the LSTM
recurrent neural network \citep{hochreiter1997long} trained to predict
the next word (from a fixed dictionary) given the current word and a
state vector passed from the previous time step. LSTM language models
are competitive with traditional n-gram models
\citep{sundermeyer2012lstm} and are a standard baseline for a variety
of ever more advanced neural language model architectures
\citep{grave2016improving, merity2016pointer,
  gal2016theoretically}. Our model uses a few tricks to decrease the
size for deployment on mobile devices (total size is 1.35M
parameters), but is otherwise standard. We evaluate using
\texttt{AccuracyTop1}, the probability that the word to which the
model assigns highest probability is correct \noiclr{(after some
  minimal normalization)}. Details on the model and evaluation metrics
  are given in \secref{details}. All training began from a common
  random initialization, though for real-world applications
  pre-training on public data is likely preferable (see
  \secref{details} for additional discussion).

\paragraph{Dataset}
We use a large public dataset of Reddit posts, as described
by~\citet{alrfou16reddit}. Critically for our purposes, each post in
the database is keyed by an author, so we can group the data by these
keys in order to provide user-level privacy. We preprocessed the
dataset to $\numusers=763,430$ users each with 1600 tokens. Thus, we
take $w_k = 1$ for all users, so $W = \numusers$. We write $\eC =
q\numusers = qW$ for the expected number of users sampled per
round. See \secref{details} for details on the dataset and
preprocessing. To allow for frequent evaluation, we use a relatively
small test set of 75122 tokens formed from random held-out posts. We
evaluate accuracy every 20 rounds and plot metrics smoothed over 5
evaluations (100 rounds).

\setlength{\pw}{2.8in}
\begin{figure}
\begin{minipage}[t]{.48\textwidth}
  \begin{center}
    \includegraphics[width=\pw]{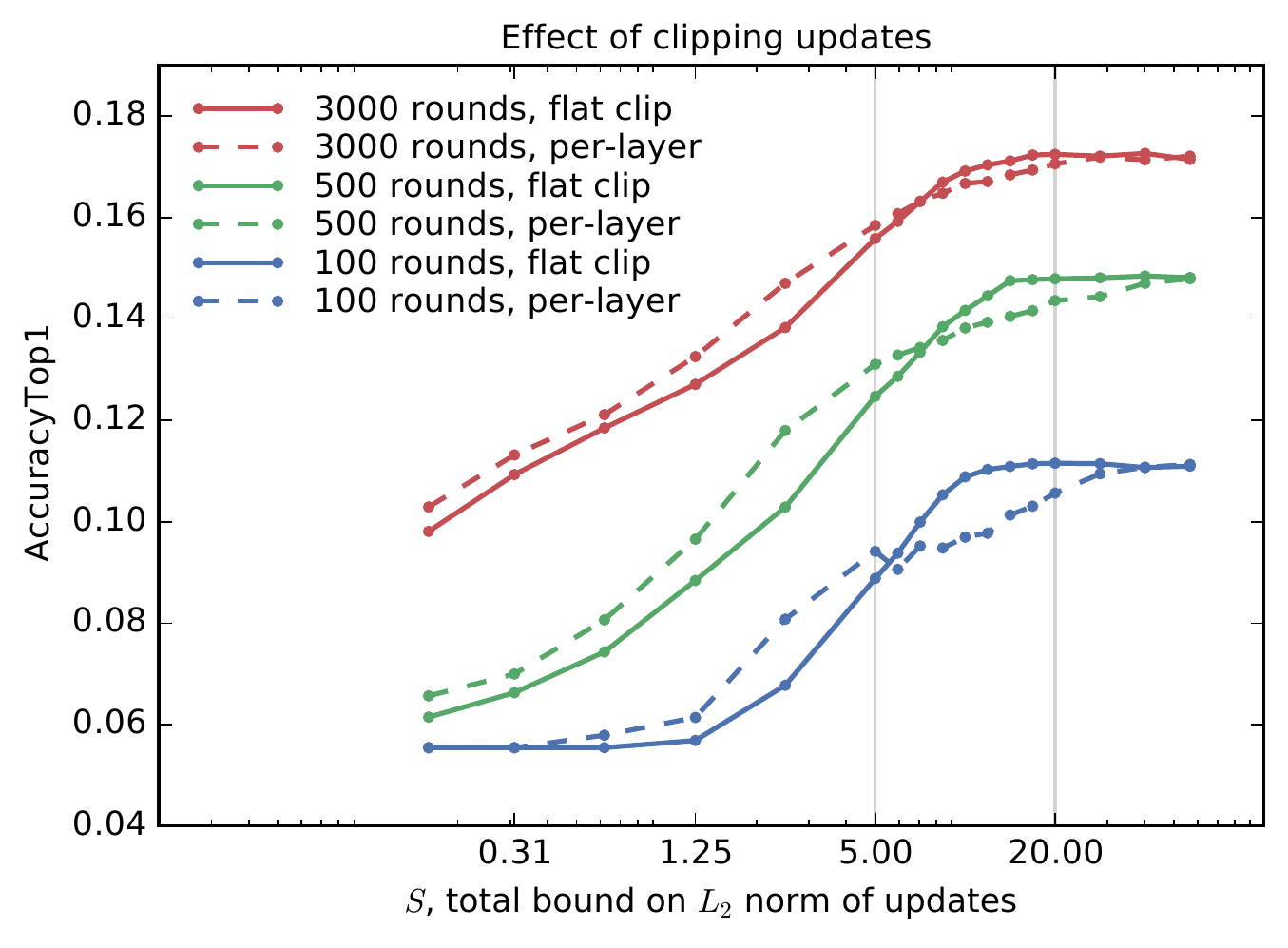} \\
    \mycaptionof{figure}{The effect of update clipping on the
      convergence of \fedavg, after 100, 500, and 3000 rounds of
      training.}
    \label{fig:effect_of_clipping}
  \end{center}
\end{minipage}
\hfill
\begin{minipage}[t]{.49\textwidth}
  \begin{center}
    \includegraphics[width=\pw]{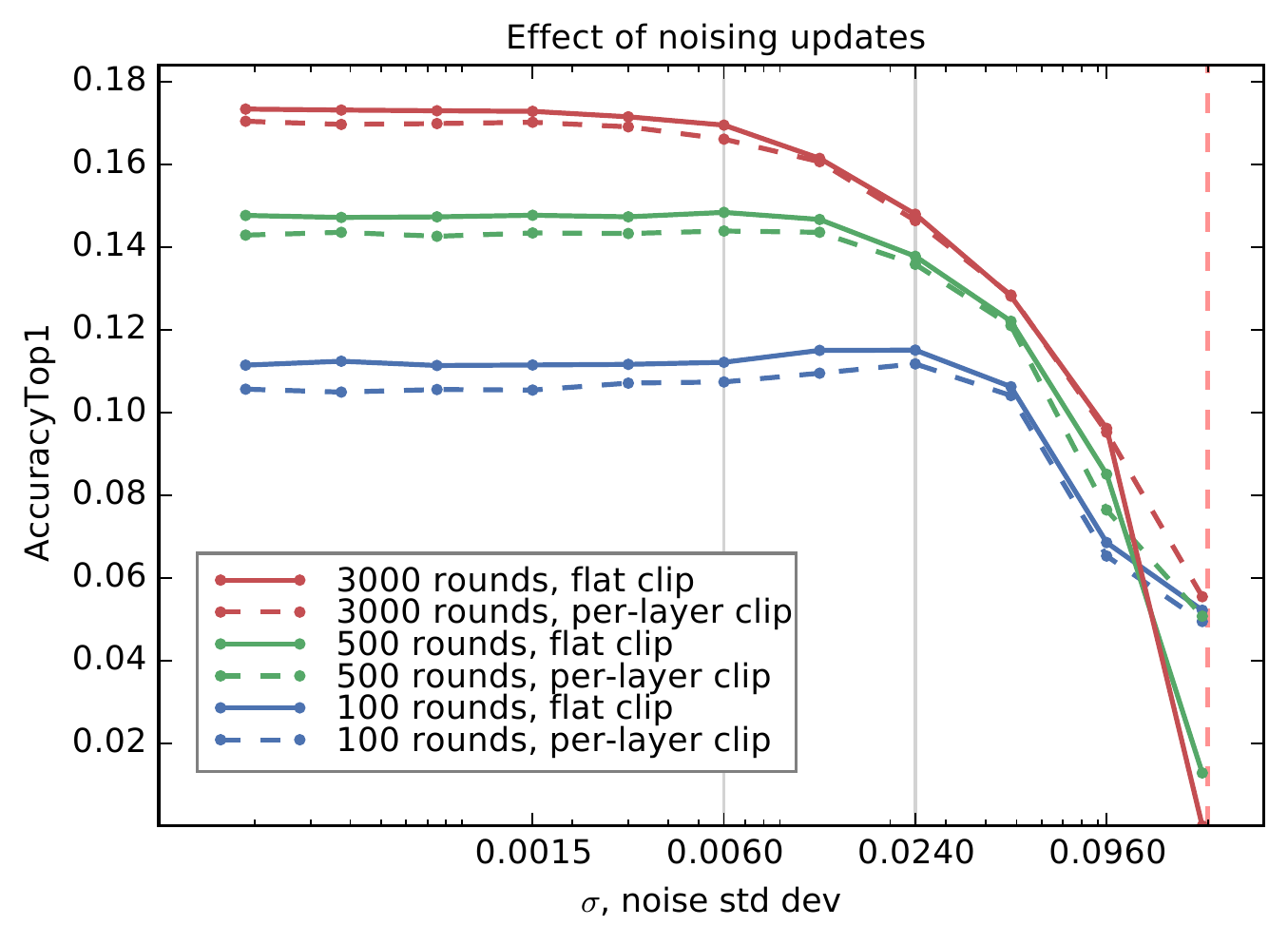} \\
    \mycaptionof{figure}{The effect of different levels of noise
      $\sigma$ for flat and per-layer clipping at $S=20$. The vertical
      dashed red line is $\sigma=0.2$.}
    \label{fig:effect_of_noise}
  \end{center}
\end{minipage}
\end{figure}

\begin{figure}
\begin{minipage}[t]{.49\textwidth}
  \begin{center}
  \end{center}
\end{minipage}
\vspace{-0.2in}
\end{figure}

\paragraph{Building towards DP: sampling, estimators, clipping, and
  noise}
Recall achieving differential privacy for \fedavg required
a number of changes (\secref{mechanism}, items A--D).  \noiclr{ 1) Use
  random-sized batches where we select users independently with
  probability $q$, rather than always selecting a fixed number; 2)
  Choose an estimator for the average update, which might, for
  example, increase variance by using the expected weight of the
  sample rather than the actual weight; 3) Enforce clipping of user
  updates so the total update has bounded $\ltwo$ norm; and, 4) Add
  sufficient Gaussian noise to the average update.}
In this section, we examine the impact of each of these changes, both
to understand the immediate effects and to enable the selection of
reasonable parameters for our final DP experiments. This sequence of
experiments also provides a general road-map for applying
differentially private training to new models and datasets. For these
experiments, we use the \fedavg algorithm with a fixed learning rate
of 6.0, which we verified was a reasonable choice in preliminary
experiments.\footnote{The proper choice of for the clipping parameters
  may depend on the learning rate, so if the learning rate is changed,
  clipping parameter choices will also need to be re-evaluated.} In
all \fedavg experiments, we used a local batch size of $\lbs=8$, an
unroll size of 10 tokens, and made $\lepochs=1$ passes over the local
dataset; thus \fedavg processes $80$ tokens per batch, processing a
user's 1600 tokens in 20 batches per round.

First, we investigate the impact of changing the estimator used for
the average per-round update, as well as replacing a fixed sample of
$\usersperround=100$ users per round to a variable-sized sample formed
by selecting each user with probability $q=100/763430$ for an
expectation of $\eC=100$ users. None of these changes significantly
impacted the convergence rate of the algorithm (see
Figure~\ref{fig:estimators_and_sampling} in \secref{moreplots}). In
particular, the fixed denominator estimator $\fixedest$ works just as
well as the higher-sensitivity clipped-denominator estimator
$\varyest$. Thus, in the remaining experiments we focus on estimator
$\fixedest$.

Next, we investigate the impact of flat and per-layer clipping on the
convergence rate of \fedavg. The model has 11 parameter vectors, and
for per-layer clipping we simply chose to distribute the clipping
budget equally across layers with $S_j =
S/\sqrt{11}$. Figure~\ref{fig:effect_of_clipping} shows that
choosing $S \in [10, 20]$ has at most a small effect on convergence
rate. \noiclr{In this regime, flat clipping slightly outperforms per-layer
clipping.}

Finally, Figure~\ref{fig:effect_of_noise} shows the impact of various
levels of per-coordinate Gaussian noise $\Normal(0, \sigma^2)$ added
to the average update. Early in training, we see almost no loss in
convergence for a noise of $\sigma=0.024$; later in training noise has
a larger effect, and we see a small decrease in convergence past
$\sigma = 0.012$.
These experiments, where we sample only an expected 100 users per
round, are not sufficient to provide a meaningful privacy
guarantee. We have $S = 20.0$ and $\eC=qW=100$, so the sensitivity of
estimator $\fixedest$ is $20/100.0 = 0.2$. Thus, to use the moments
accountant with $\noisescale=1$, we would need to add noise
$\sigma=0.2$ (dashed red vertical line), which destroys accuracy.

\paragraph{Estimating the accuracy of private models for large datasets}
Continuing the above example, if instead we choose $q$ so $\eC=1250$,
set the $\ltwo$ norm bound $S=15.0$, then we have sensitivity $15/1250 =
0.012$, and so we add noise $\sigma = 0.012$ and can apply the moments
account with noise scale $\noisescale=1$. The computation is now
significantly more computationally expensive, but will give a
guarantee of $(1.97, 10^{-9})$-differential privacy after 3000
rounds of training.
Because running such experiments is so computationally expensive, for
experimental purposes it is useful to ask: does using an expected
$1250$ users per round produce a model with different
\emph{accuracy} than a model trained with only $100$ expected users
per round? If the answer is no, we can train a model with $\eC=100$
and a particular noise level $\sigma$, and use that model to estimate
the utility of a model trained with a much larger $q$ (and hence a
much better privacy guarantee). We can then run the moments accountant
(without actually training) to numerically upper bound the privacy
loss.
To test this, we trained two models, both with $S=15$ and
$\sigma=0.012$, one with $\eC=100$ and one with $\eC=1250$; recall the
first model achieves a vacuous privacy guarantee, while the second
achieves $(1.97, 10^{-9})$-differential privacy after 3000
rounds. Figure~\ref{fig:noised_100_vs_1250_training} in
\secref{moreplots} shows the two models produce almost identical
accuracy curves during training. Using this observation, we can use
the accuracy of models trained with $\eC=100$ to estimate the utility
of private models trained with much larger $\eC$. See also
Figure~\ref{fig:users_per_round} in \secref{moreplots}, which also
shows diminishing returns for larger $C$ for the standard \fedavg
algorithm.

Figure~\ref{fig:noised_vs_baseline} compares the true-average
fixed-sample baseline model (see
Figure~\ref{fig:estimators_and_sampling} in \secref{moreplots}) with
models that use varying levels of clipping $S$ and noise $\sigma$ at
$\eC=100$. Using the above approach, we can use these experiments to
estimate the utility of LSTMs trained with differential privacy for
different sized datasets and different values of
$\eC$. Table~\ref{table:dpaccuracy} shows representative values
setting $\eC$ so that $\noisescale=1$. For example, the model with
$\sigma=0.003$ and $S=15$ is only worse than the baseline by an
additive $-0.13\%$ in AccuracyTop1 and achieves $(4.6,
10^{-9})$-differential privacy when trained with $\eC=5000$ expected
users per round. As a point of comparison, we have observed that
training on a different corpus can cost an additive $-2.50\%$ in
AccuracyTop1.\footnote{This experiment was performed on different
  datasets, comparing training on a dataset of public social media
  posts to training on a proprietary dataset which is more
  representative of mobile keyboard usage, and evaluating on a
  held-out sample of the same representative dataset. Absolute
  AccuracyTop1 was similar to the values we report here for the Reddit
  dataset.}

\begin{figure}[t]
\begin{minipage}[t]{.49\textwidth}
  \begin{center}
    \includegraphics[width=\pw]{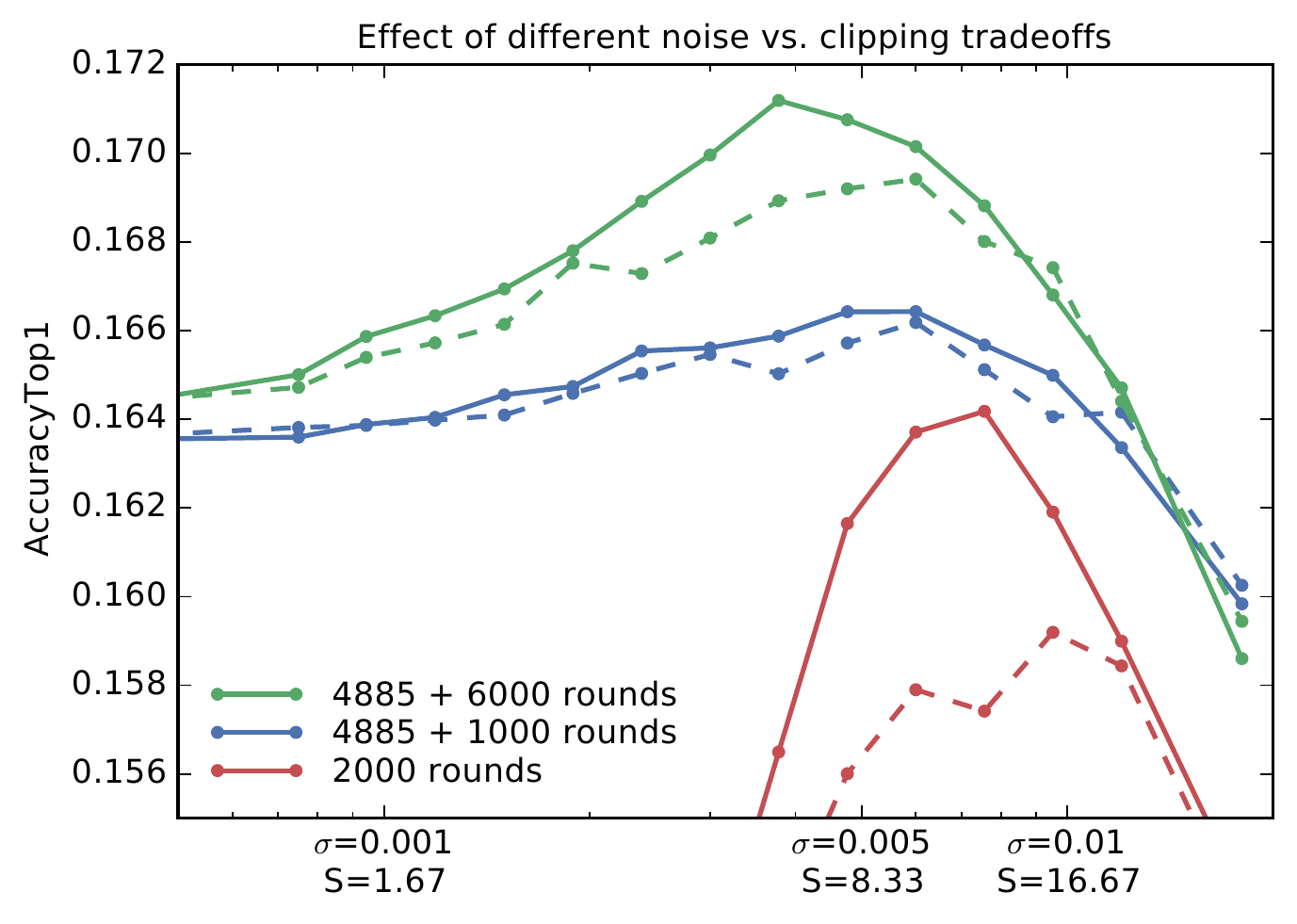} \\
    \mycaptionof{figure}{Different noise/clipping tradeoffs (all of
      equal privacy cost), for initial training (red) and adjusted
      after 4885 rounds (green and blue). Solid lines use flat
      clipping, dashed are per-layer.}
   \label{fig:continued_effect}
   \vfill
  \end{center}
  \vfill
\end{minipage}
\hfill
\begin{minipage}[t]{.49\textwidth}
  \begin{center}
    \includegraphics[width=\pw]{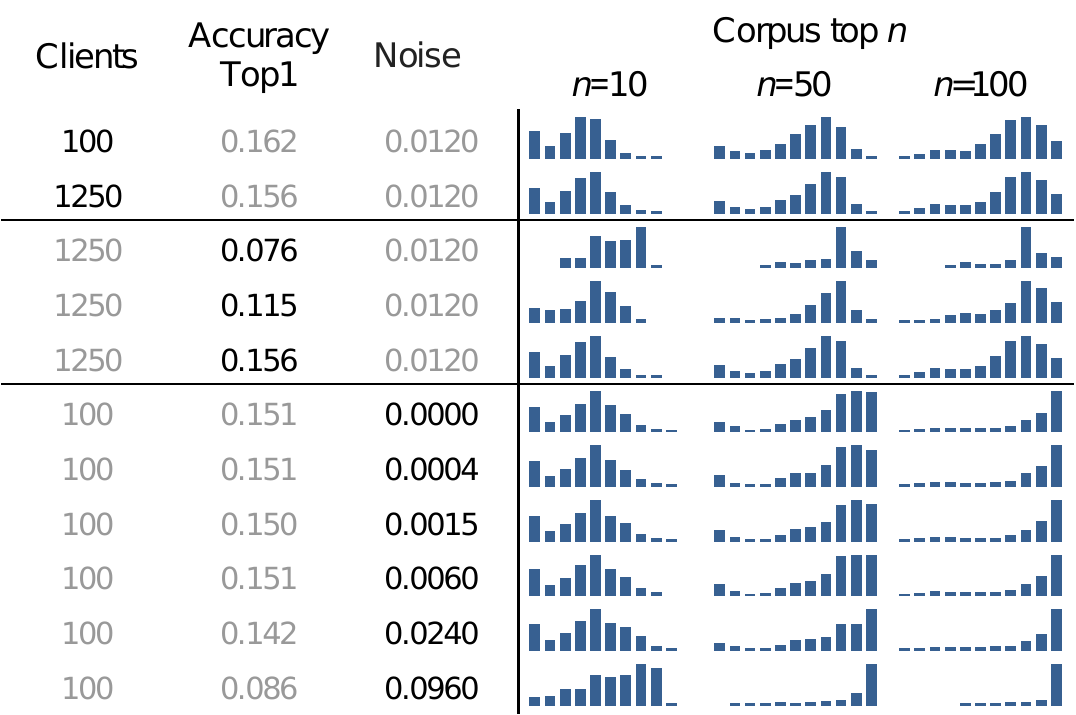} \\
    \mycaptionof{table}{Count histograms recording how many of a
      model's (row's) top 10 predictions are found in the $n=10$,
      $50$, or $100$ most frequent words in the corpus. Models that
      predict corpus top-$n$ more frequently have more mass to the
      right.}
    \label{table:lm_headedness}
  \end{center}
\end{minipage}
	\vspace{-0.2in}
\end{figure}

\paragraph{Adjusting noise and clipping as training progresses}
Figure~\ref{fig:noised_vs_baseline} shows that as training progresses,
each level of noise eventually becomes detrimental (the line drops
somewhat below the baseline).\noiclr{: $\sigma = 0.024$ has no effect
  until AccuracyTop1 reaches about $13\%$ but slows convergence after
  that, $\sigma=0.012$ doesn't hurt until AccuracyTop1 reaches
  $14.5\%$, and so on. }
This suggests using a smaller $\sigma$ and correspondingly smaller $S$
(thus fixing $\noisescale$ so the privacy cost of each round is
unchanged) as training progresses. Figure \ref{fig:continued_effect}
(and Figure \ref{fig:continued_training} in \secref{moreplots}) shows
this can be effective.  We indeed observe that early in training
(red), $S$ in the $10$ -- $12.6$ range works well ($\sigma = 0.006$ --
$0.0076$). However, if we adjust the clipping/noise tradeoff after
4885 rounds of training and continue for another 6000, switching to
$S=7.9$ and $\sigma=0.0048$ performs better.

\paragraph{Comparing DP and non-DP models}
While noised training with \dpfedavg has only a small effect on
predictive accuracy, it could still have a large qualitative effect on
predictions. We hypothesized that noising updates might bias the model
away from rarer words (whose embeddings get less frequent actual
updates and hence are potentially more influenced by noise) and toward
the common ``head'' words.
To evaluate this hypothesis, we computed predictions on a sample of
the test set using a variety of models. At each $\word_t$ we intersect
the top 10 predictions with the most frequent ${10, 50, 100}$ words in
the dictionary. So for example, an intersection of size two in the top
50 means two of the model's top 10 predictions are in the 50 most
common words in the dictionary.  Table~\ref{table:lm_headedness} gives
histograms of these counts.
We find that better models (higher AccuracyTop1) tend to use fewer
head words, but see little difference from changing $\eC$ or the noise
$\sigma$ (until, that is, enough noise has been added to compromise
model quality, at which point the degraded model's bias toward the
head matches models of similar quality with less noise).

\section{Conclusions}
In this work, we introduced an algorithm for user-level differentially
private training of large neural networks, in particular a complex
sequence model for next-word prediction.  We empirically evaluated the
algorithm on a realistic dataset and demonstrated that such training
is possible at a negligible loss in utility, instead paying a cost in
additional computation.
Such private training, combined with federated learning (which leaves
the sensitive training data on device rather than centralizing it),
shows the possibility of training models with significant privacy
guarantees for important real world applications.
Much future work remains, for example designing private algorithms
that automate and make adaptive the tuning of the clipping/noise
tradeoff, and the application to a wider range of model families and
architectures, for example GRUs and character-level models. Our work
also highlights the open direction of reducing the computational overhead
of differentially private training of non-convex models.

\begin{small}
\bibliography{dp_lm.bib}
\end{small}

\clearpage

\appendix

\section{Additional proofs}\label{sec:proofs}

\begin{proof}[Proof of Lemma~\ref{lem:sensitivity}]
  For the first bound, observe the numerator in the estimator
  $\fixedest$ can change by at most $S$ between neighboring databases,
  by assumption. The denominator is a constant.  For the second bound,
  the estimator $\varyest$ can be thought of as the sum of the vectors
  $w_k \update_k$ divided by $\max(qW_{min}, \sum_{k \in \sample}
	\update_k)$. Writing $\Num(\sample)$ for the numerator $\sum_{k \in \sample} w_k \update_k$, and $\Den(\sample)$ for the denominator $\max(q\Wmin, \sum_{k \in \sample} w_k)$, the following are immediate for any $\sample$ and $\sample' \eqdef \sample \cup \{k\}$:
	\begin{align*}
		\|\Num(\sample') - \Num(\sample)\| &= \| w_k\update_k\| \leq S.\\
		\|\Den(\sample') - \Den(\sample)\| & \leq 1.\\
		\|\Den(\sample')\| &\geq q\Wmin.
	\end{align*}
  It follows that
	\begin{align*}
		\|\varyest(\sample') - \varyest(\sample)\| &= \left\| \frac{\Num(\sample')}{\Den(\sample')} - \frac{\Num(\sample)}{\Den(\sample)}\right\|\\
		&= \left\| \frac{\Num(\sample') - \Num(\sample)}{\Den(\sample')} + \Num(\sample)\left(\frac{1}{\Den(\sample')} - \frac{1}{\Den(\sample)}\right)\right\|\\
		&\leq \left\|\frac{w_k\update_k}{\Den(\sample')}\right\| + \left\|\frac{\Num(\sample)}{\Den(\sample)}\left(\frac{\Den(\sample) - \Den(\sample')}{\Den(\sample')}\right)\right\|\\
		&\leq \frac{S}{q\Wmin} + \|\varyest(\sample)\|\left(\frac{1}{q\Wmin}\right)\\
		&\leq \frac{2S}{q\Wmin}.
	\end{align*}
	Here in the last step, we used the fact that $\|\varyest(\sample)\| \leq S$. The claim follows.
\end{proof}

\begin{proof}[Proof of Theorem~\ref{thm:privacy_guarantee}]
It suffices to verify that 1. the moments (of the privacy loss) at each
step are correctly bounded; and, 2. the composability holds when
accumulating the moments of multiple steps.

At each step, users are selected randomly with probability $q$. If in
addition the $\ltwo$-norm of each user's update is upper-bounded by
$\sensitivity$, then the moments can be upper-bounded by that of the
sampled Gaussian mechanism with sensitivity $1$, noise scale
$\sigma/\sensitivity$, and sampling probability $q$.

Our algorithm, as described in Figure~\ref{alg:fedavg}, uses a fixed
noise variance and generates the i.i.d. noise independent of the
private data. Hence we can apply the composability as in Theorem~2.1
in~\cite{abadi16dpdl}.

We obtain the theorem by combining the above and the sensitivity bounds
$\fixedest$ and $\varyest$.
\end{proof}

\section{Experiment details}\label{sec:details}

\paragraph{Model}
The first step in training a word-level recurrent language model is
selecting the vocabulary of words to model, with remaining words
mapped to a special ``UNK'' (unknown) token.  Training a fully
differentially private language model from scratch requires a private
mechanism to discover which words are frequent across the corpus, for
example using techniques like distributed heavy-hitter estimation
\citep{chan2012differentially,bassily17practical}.  For this work, we
simplified the problem by pre-selecting a dictionary of the most
frequent 10,000 words (after normalization) in a large corpus of mixed
material from the web and message boards (but not our training or test
dataset).

Our recurrent language model works as follows: word $\word_t$ is
mapped to an embedding vector $e_t\in\R^{96}$ by looking up the word
in the model's vocabulary. The $e_t$ is composed with the state
emitted by the model in the previous time step $s_{t-1}\in\R^{256}$ to
emit a new state vector $s_t$ and an ``output embedding''
$o_t\in\R^{96}$. The details of how the LSTM composes $e_t$ and
$s_{t-1}$ can be found in \cite{hochreiter1997long}. The output
embedding is scored against the embedding of each item in the
vocabulary via inner product, before being normalized via softmax to
compute a probability distribution over the vocabulary.
Like other standard language modeling applications, we treat every
input sequence as beginning with an implicit ``BOS'' (beginning of
sequence) token and ending with an implicit ``EOS'' (end of sequence)
token.

Unlike standard LSTM language models, our model uses the same learned
embedding for the input tokens and for determining the predicted
distribution on output tokens from the
softmax.\footnote{\citet{ofir17output} independently introduced this
  technique and provide an empirical analysis comparing models with
  and without weight tying.} This reduces the size of the model by
about 40\% for a small decrease in model quality, an advantageous
tradeoff for mobile applications. Another change from many standard
LSTM RNN approaches is that we train these models to restrict the word
embeddings to have a fixed $\ltwo$ norm of 1.0, a modification found
in earlier experiments to improve convergence time.  In total the
model has 1.35M trainable parameters.

\paragraph{Initialization and personalization}

For many applications public proxy data is available, e.g., for
next-word prediction one could use public domain books, Wikipedia
articles, or other web content. In this case, an initial model trained
with standard (non-private) algorithms on the public data (which is
likely drawn from the wrong distribution) can then be further refined
by continuing with differentially-private training on the private data
for the precise problem at hand. Such pre-training is likely the best
approach for practical applications. However, since training models
purely on private data (starting from random initialization) is a
strictly harder problem, we focus on this scenario for our
experiments.

Our focus is also on training a single model which is shared by all
users. However, we note that our approach is fully compatible with
further on-device personalization of these models to the particular
data of each user. It is also possible to give the central model some
ability to personalize simply by providing information about the user
as a feature vector along with the raw text input. LSTMs are
well-suited to incorporating such additional context.

\paragraph{Accuracy metrics}
We evaluate using \texttt{AccuracyTop1}, the probability that the word
to which the model assigns highest probability is correct (after some
minimal normalization). We always count it as a mistake if the true
next word is not in the dictionary, even if the model predicts UNK, in
order to allow fair comparisons of models using different
dictionaries. In our experiments, we found that our model
architecture is competitive on \texttt{AccuracyTop1} and related
metrics (Top3, Top5, and perplexity) across a variety of tasks and
corpora.
\noiclr{Our experiments focus on AccuracyTop1 because that metric is
simple to interpret and is well correlated with improvements to the
typing experience on mobile keyboards.}

\paragraph{Dataset}
The Reddit dataset can be accessed through Google
BigQuery \citep{reddit_dataset}. Since our goal is to limit the
contribution of any one author to the final model, it is not necessary
to include all the data from users with a large number of posts. On
the other hand, processing users with too little data slows
experiments (due to constant per-user overhead). Thus, we use a
training set where we have removed all users with fewer than 1600
tokens (words), and truncated the remaining $\numusers=763,430$ users
to have exactly 1600 tokens.

We intentionally chose a public dataset for research purposes, but
carefully chose one with a structure and contents similar to private
datasets that arise in real-world language modeling task such as
predicting the next-word in a mobile keyboard. This allows for
reproducibility, comparisons to non-private models, and inspection of
the data to understand the impact of differential privacy beyond
coarse aggregate statistics (as in Table~\ref{table:lm_headedness}).

\clearpage
\section{Supplementary Plots}\label{sec:moreplots}

\setlength{\pw}{2.8in}
\begin{figure}[H]
   \begin{center}
     \includegraphics[width=\pw]{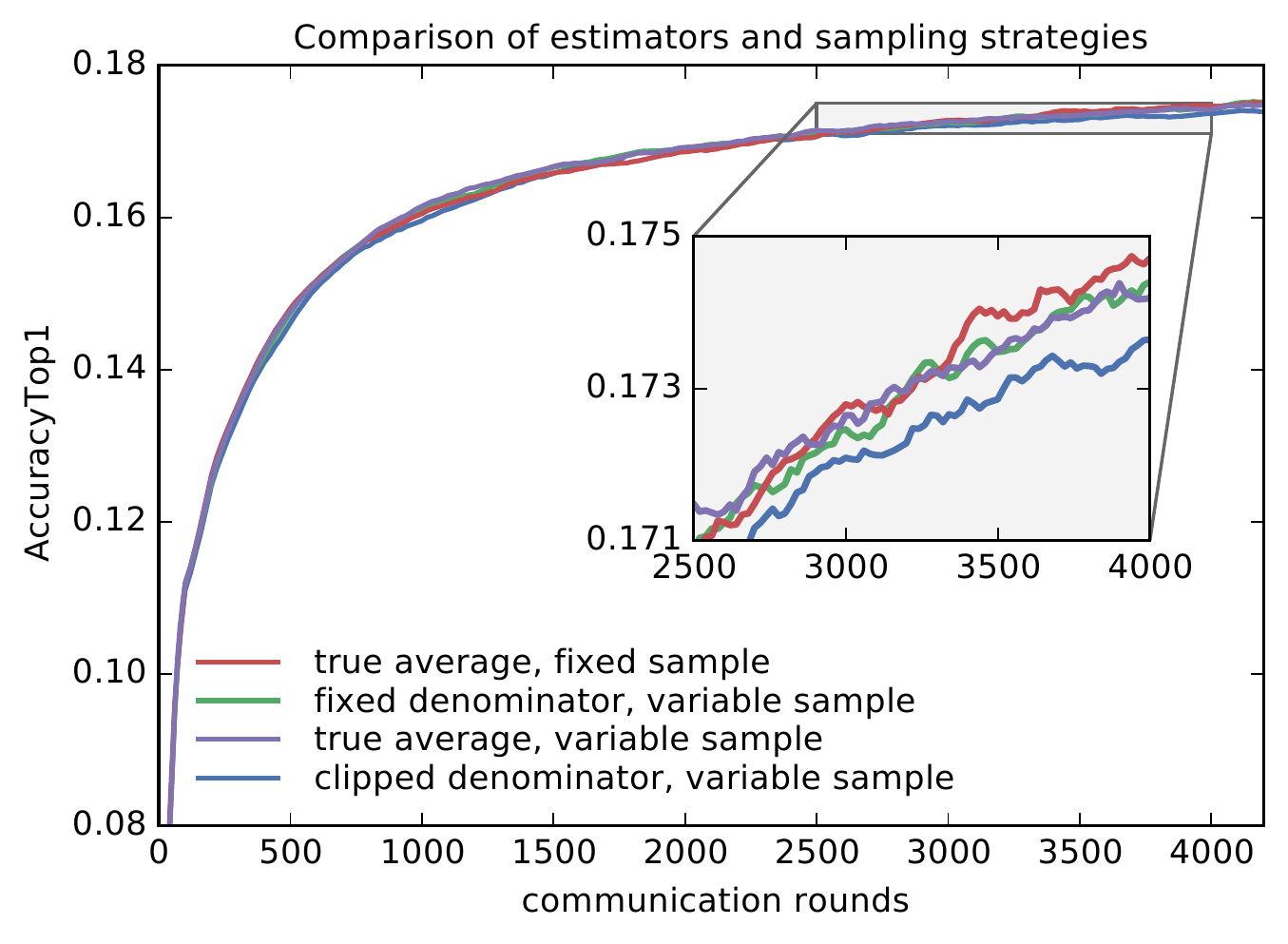} \\
     \vskip -0.06in \mycaptionof{figure}{Comparison of sampling
       strategies and estimators. \emph{Fixed sample} is exactly
       $C=100$ users per round, and \emph{variable sample} selects
       uniformly with probability $q$ for $\eC=100$. The \emph{true
         average} corresponds to $f$, \emph{fixed denominator} is
       $\fixedest$, and \emph{clipped denominator} is $\varyest$.}
    \label{fig:estimators_and_sampling}
  \end{center}
\end{figure}
\vspace{-0.2in}

\begin{figure}[H]
\begin{minipage}[t]{.49\textwidth}
  \begin{center}
    \includegraphics[width=\pw]{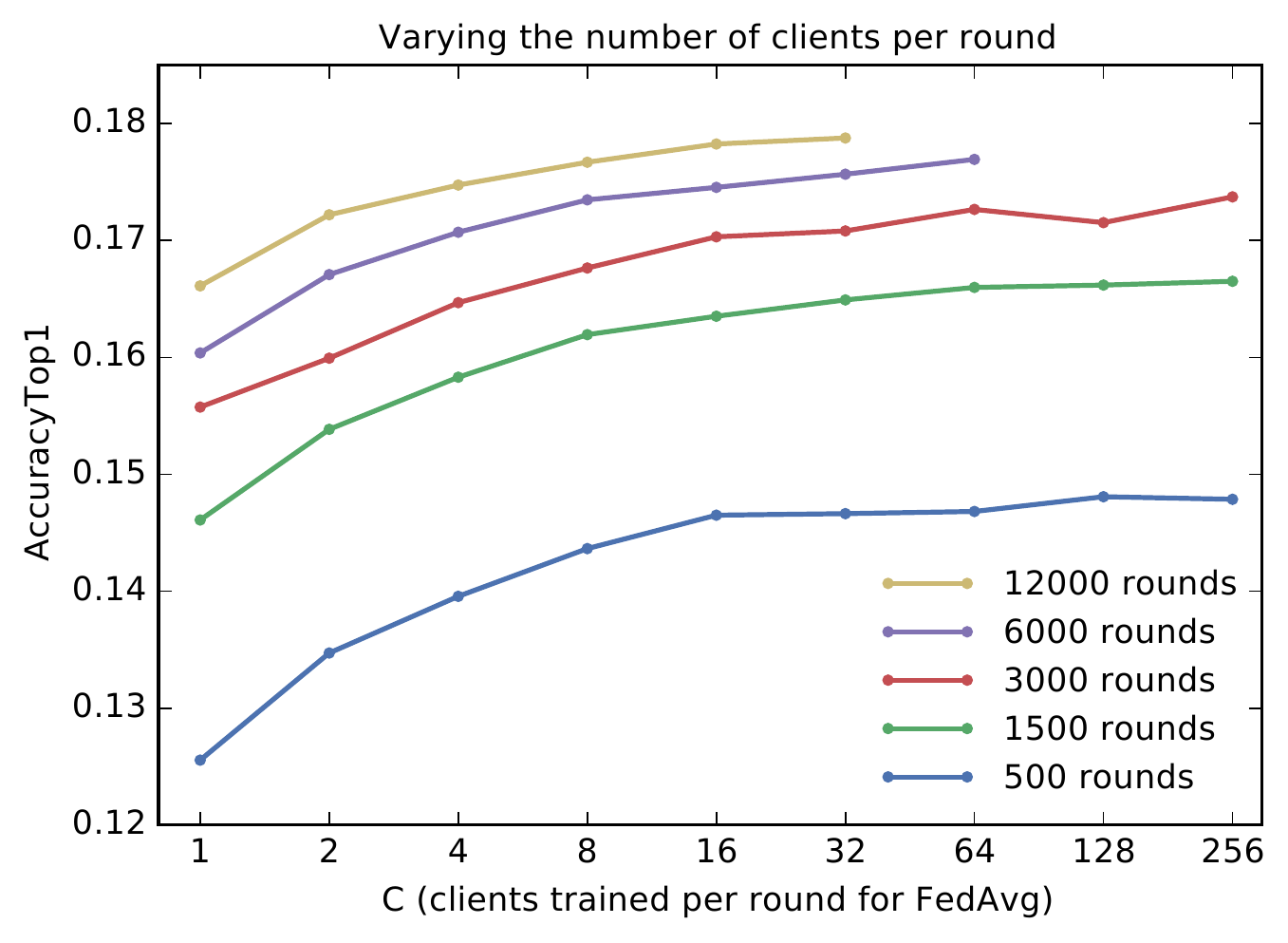} \\
    \mycaptionof{figure}{The effect of $C$ for \fedavg using the exact
      estimator and without noise or clipping.}
    \label{fig:users_per_round}
  \end{center}
\end{minipage}
\hfill
\begin{minipage}[t]{.49\textwidth}
  \begin{center}
    \includegraphics[width=\pw]{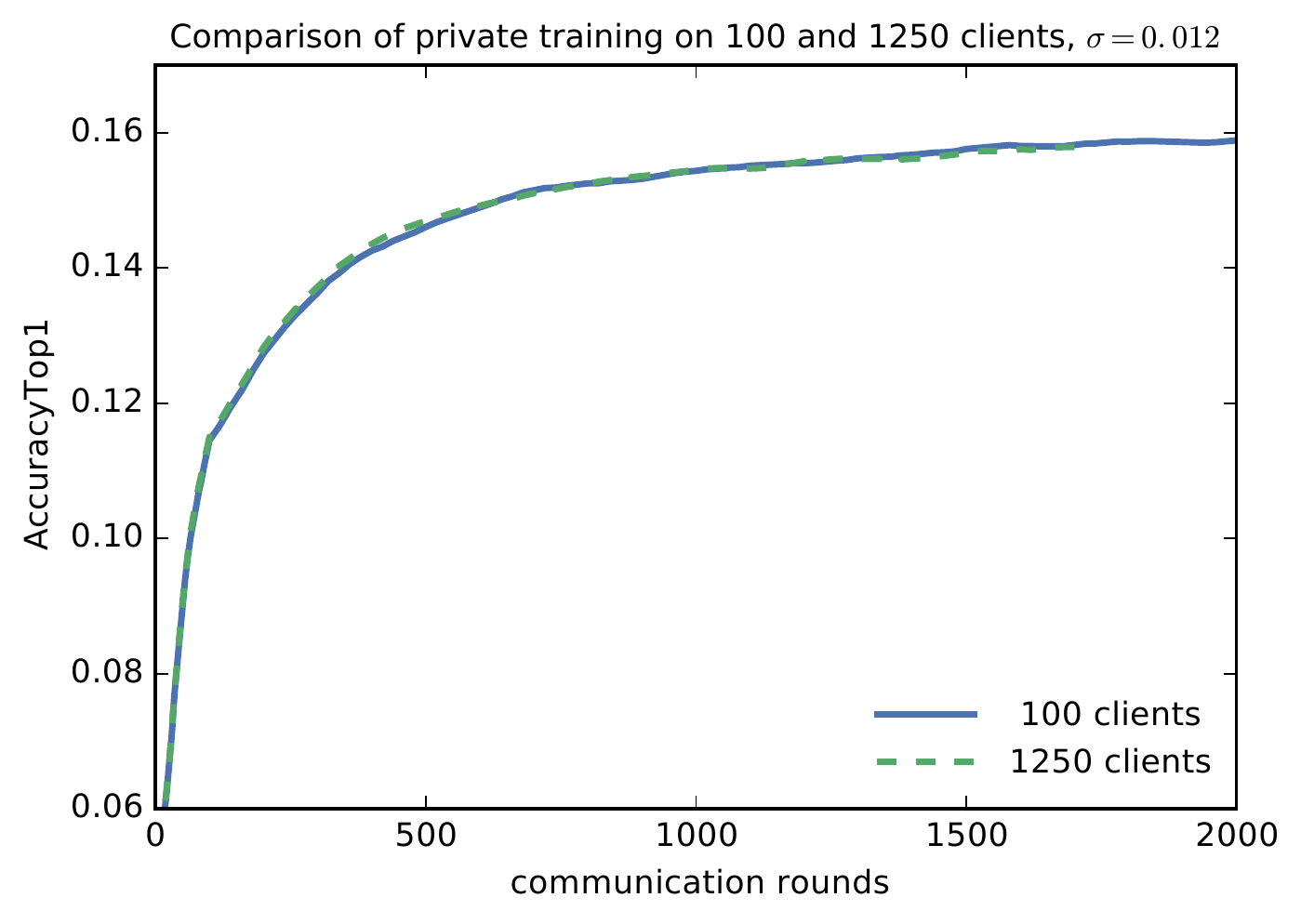} \\
    \mycaptionof{figure}{Training with (expected) 100 vs 1250 users
      per round, both with flat-clipping at $S=15.0$ and
      per-coordinate noise with $\sigma=0.012$.}
    \label{fig:noised_100_vs_1250_training}
  \end{center}
\end{minipage}
\end{figure}
\vspace{-0.2in}

\begin{figure}[H]
\begin{center}
  \includegraphics[width=2.7in]{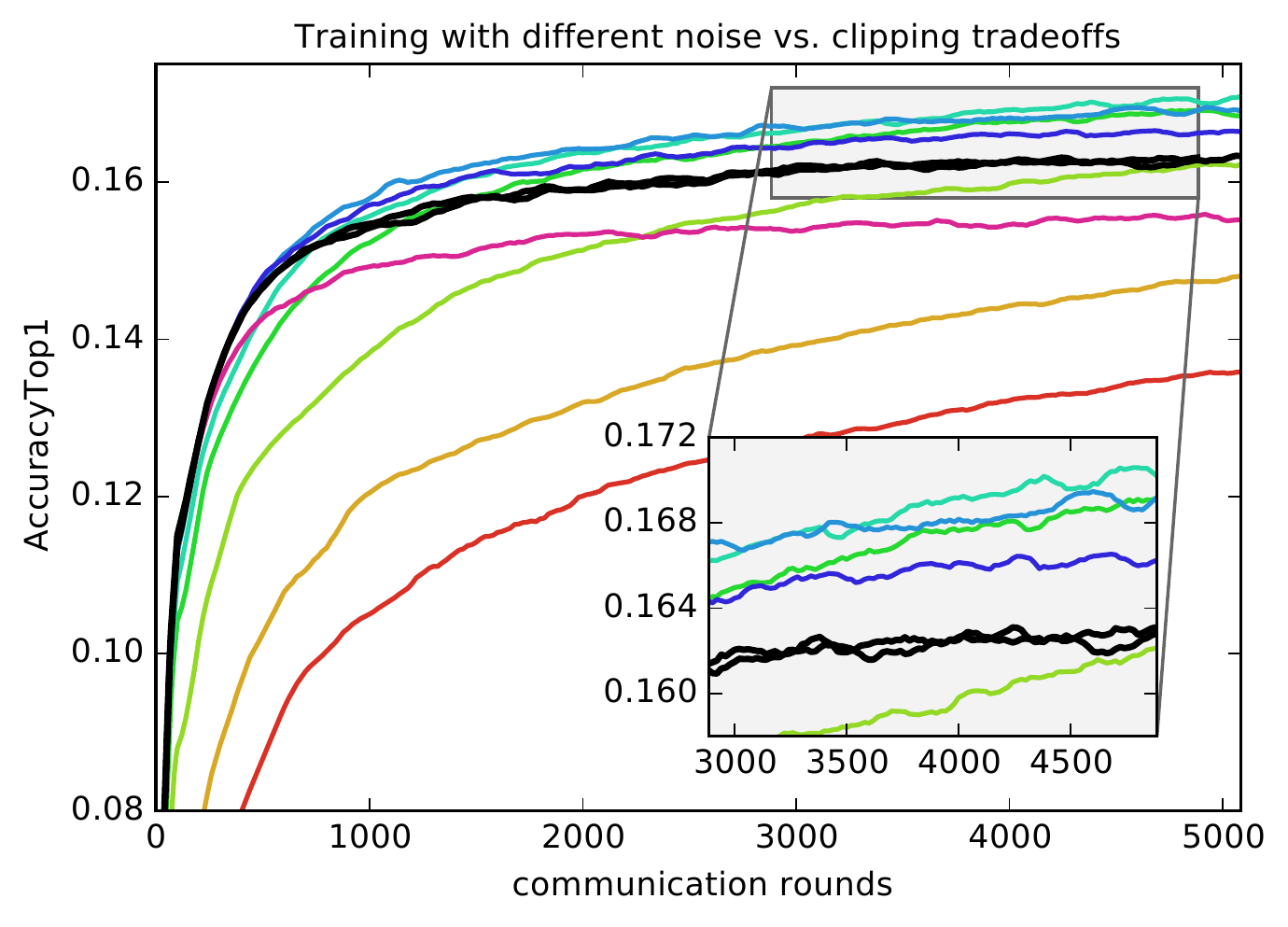}
  \includegraphics[width=2.7in]{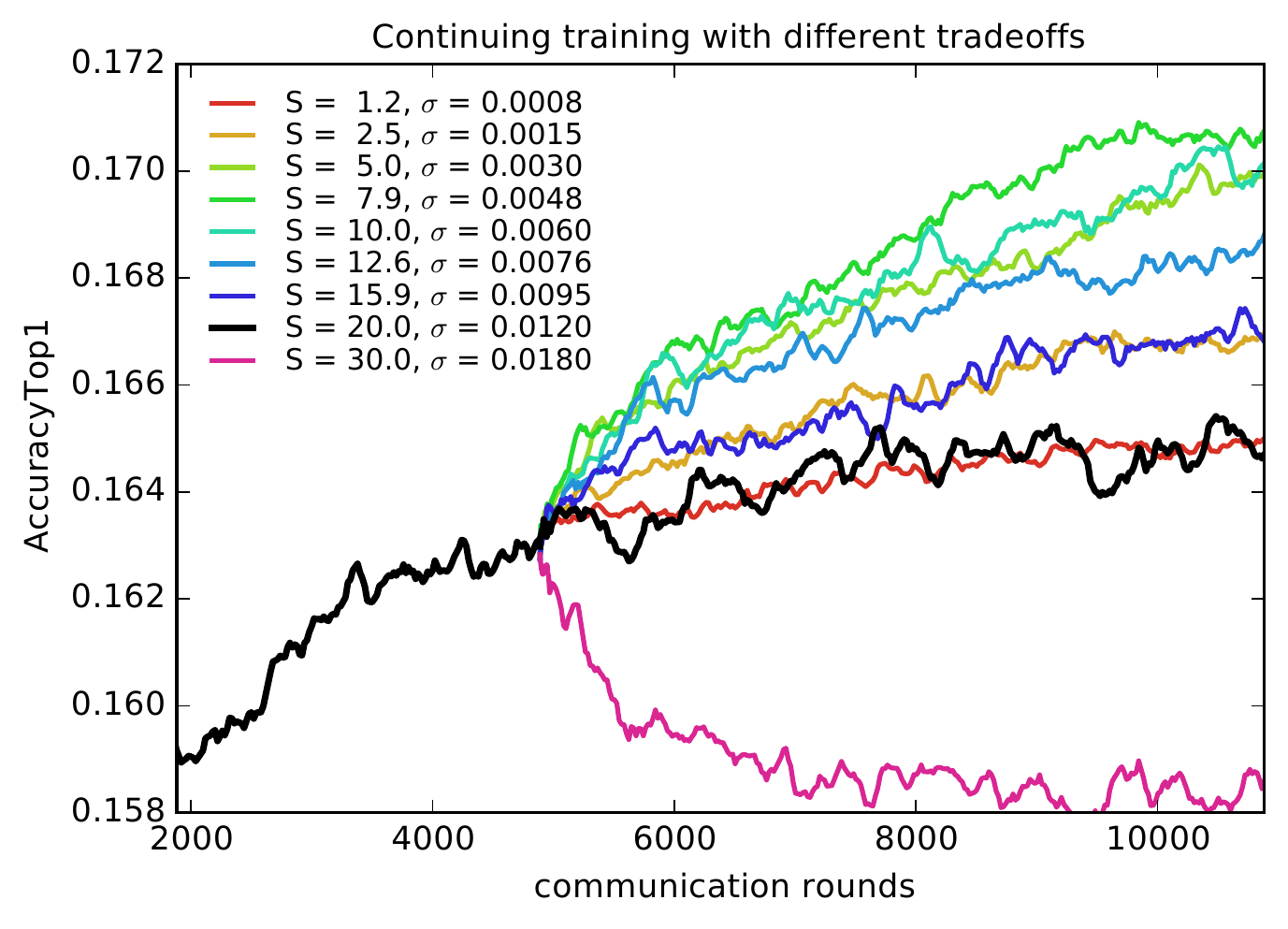}
  \mycaptionof{figure}{The effect of different noise vs. clipping
    tradeoffs on convergence. Both plots use the same legend, where we
    vary $S$ and $\sigma$ together to maintain the same $\noisescale =
    0.06$ with 100 users (actually used), or $z=1$ with 1667 users. We
    take $S=20$ and $\sigma=0.012$ (black line) as a baseline; the
    left-hand plot shows training from a randomly initialized model,
    and includes two different runs with $S=20$, showing only mild
    variability. For the right-hand plot, we took a snapshot of the
    $S=20$ model after 4885 initial rounds of training, and resumed
    training with different
    tradeoffs.
}
   \label{fig:continued_training}
\end{center}
\end{figure}

\clearpage
\section{Additional Experiments}
\label{sec:additional}

\paragraph{Experiments with SGD}
We ran experiments using \fedsgd taking $\lbs=1600$, that is,
computing the gradient on each user's full local dataset. To allow
more iterations, we used $\eC=50$ rather than 100.
Examining Figures~\ref{fig:effect_of_clipping_sgd} and
\ref{fig:effect_of_noise_sgd}, we see $S=2$ and $\sigma=2\cdot10^{-3}$
are reasonable values, which suggests for private training we would
need in expectation $qW = S/\sigma = 1500$ users per round, whereas
for \fedavg we might choose $S=15$ and $\sigma=10^{-2}$ for $\eC
=qW=1000$ users per round. That is, the relative effect of the ratio
of the clipping level to noise is similar between \fedavg and
\fedsgd. However, \fedsgd takes a significantly larger number of
iterations to reach equivalent accuracy. Fixing $z=1$, $\eC = 5000$
(the value that produced the best accuracy for a private model in
Table~\ref{table:dpaccuracy}) and total of 763,430 users gives $(3.81,
10^{-9})$-DP after 3000 rounds and $(8.92, 10^{-9})$-DP after 20000
rounds, so there is indeed a significant cost in privacy to these
additional iterations.

\setlength{\pw}{2.4in}
\begin{figure}[t]
\begin{minipage}{.49\textwidth}
   \begin{center}
   \includegraphics[width=\pw]{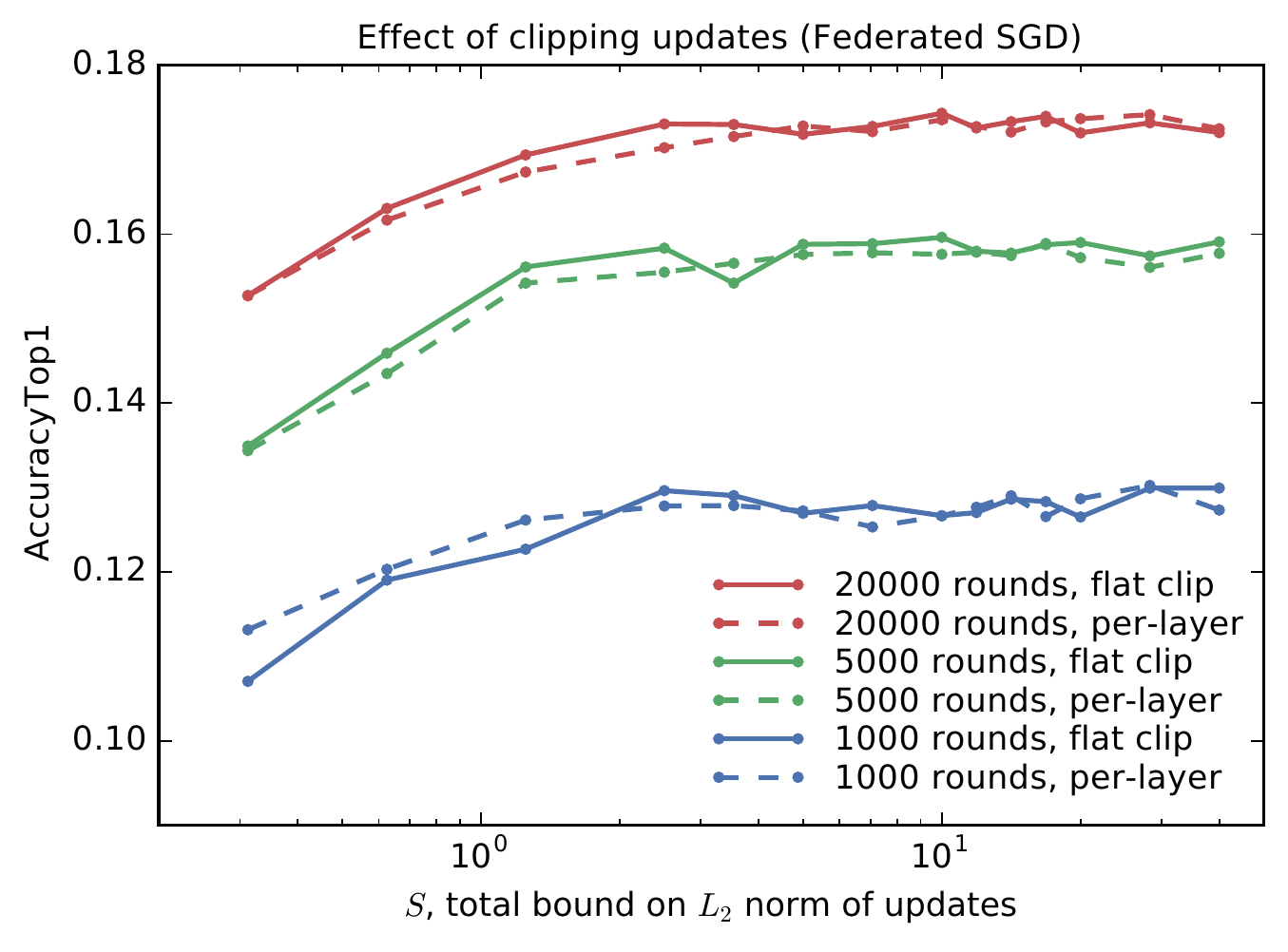} \\
   \mycaptionof{figure}{Effect of clipping on \fedsgd with $\eC = 50$ users
   per round and a learning rate of $\eta=6$. A much smaller clipping level $S$
   can be used compared to \fedavg.}
   \label{fig:effect_of_clipping_sgd}
  \end{center}
\end{minipage}%
\hfill
\begin{minipage}{.49\textwidth}
  \begin{center}
   \includegraphics[width=\pw]{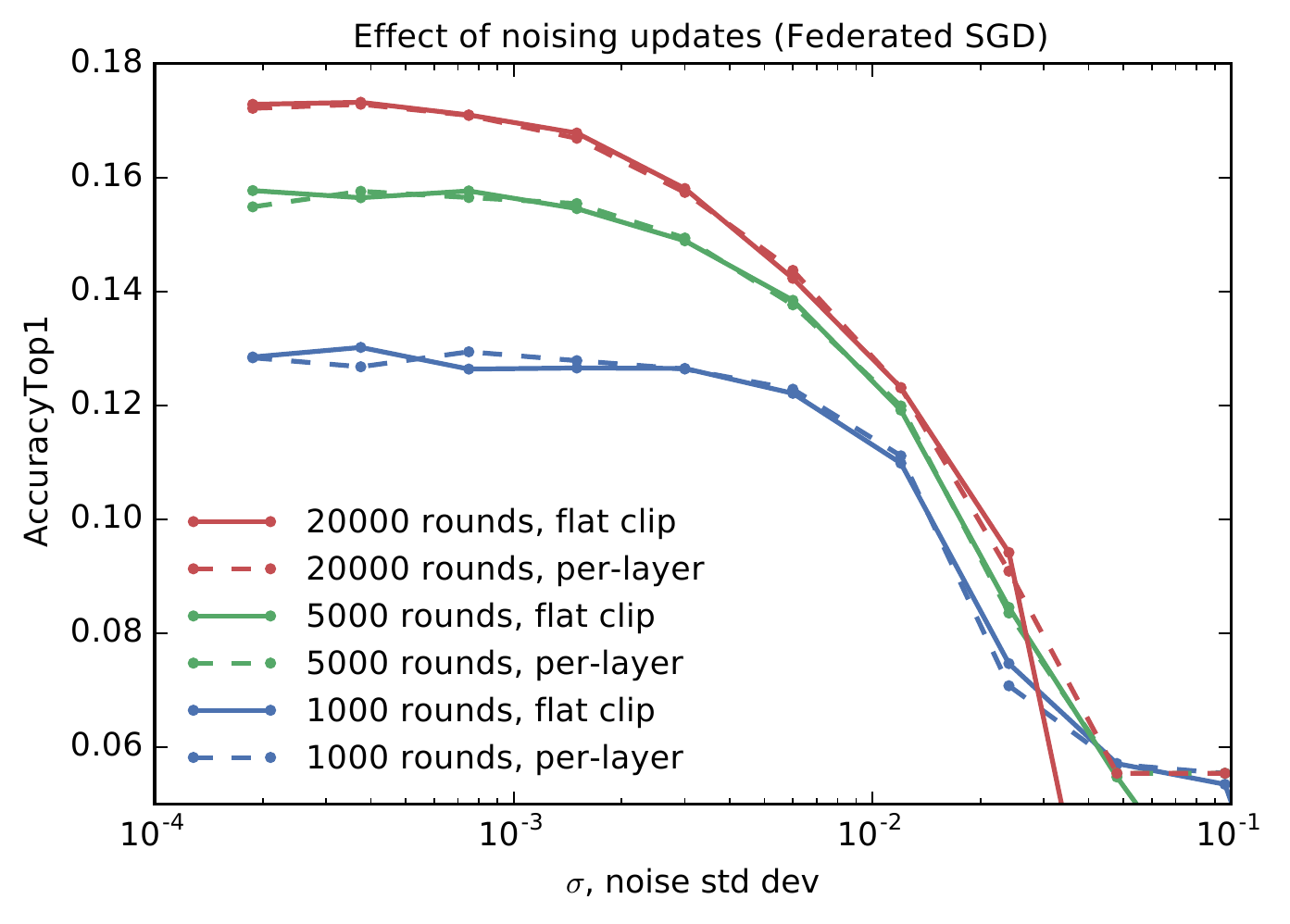} \\
   \mycaptionof{figure}{Effect of noised updates on \fedsgd with $S=20$ (based
   on Figure~\ref{fig:effect_of_clipping_sgd}, a smaller value would actually
   be better when doing private training). \fedsgd is more sensitive to
   noise than \fedavg, likely because the updates are smaller in magnitude.}
   \label{fig:effect_of_noise_sgd}
  \end{center}
\end{minipage}
\end{figure}

\paragraph{Models with larger dictionaries}
We repeated experiments on the impact of clipping and noise on models
with 20000 and 30000 token dictionaries, again using \fedavg training
with $\eta=6$, equally weighted users with 1600 tokens, and
$\eC=100$ expected users per round. The larger
dictionaries give only a modest improvement in accuracy, and do not
require changing the clipping and noise parameters despite having
significantly more parameters. Results are given in
Figures~\ref{fig:effect_of_clipping_20k_30k} and
\ref{fig:noise_20k_30k}.

\begin{figure}[t]
\begin{minipage}{.49\textwidth}
   \begin{center}
   \includegraphics[width=\pw]{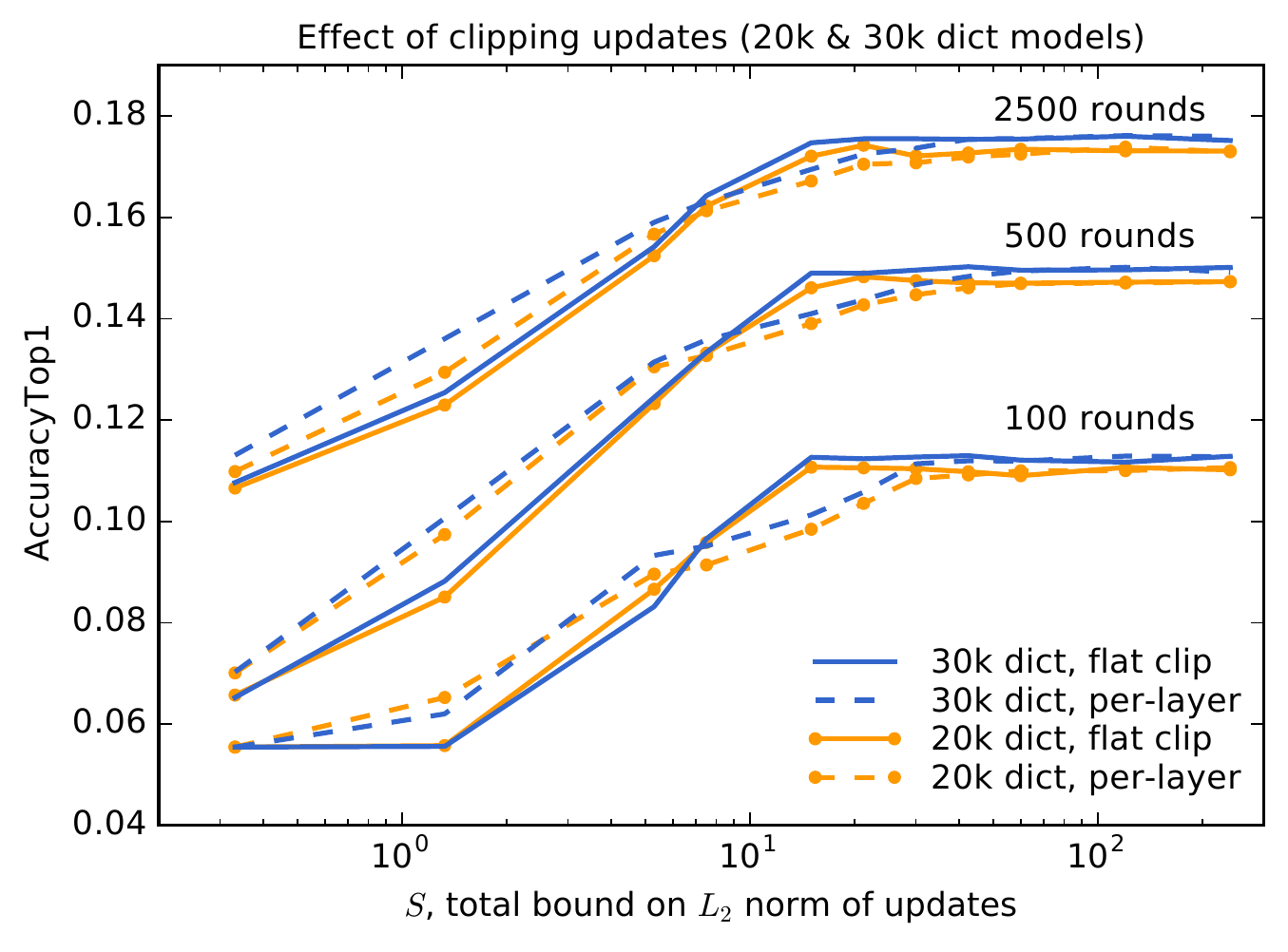} \\
   \mycaptionof{figure}{Effect of clipping on models with larger dictionaries (20000 and 30000 tokens).}
   \label{fig:effect_of_clipping_20k_30k}
  \end{center}
\end{minipage}%
\hfill
\begin{minipage}{.49\textwidth}
  \begin{center}
   \includegraphics[width=\pw]{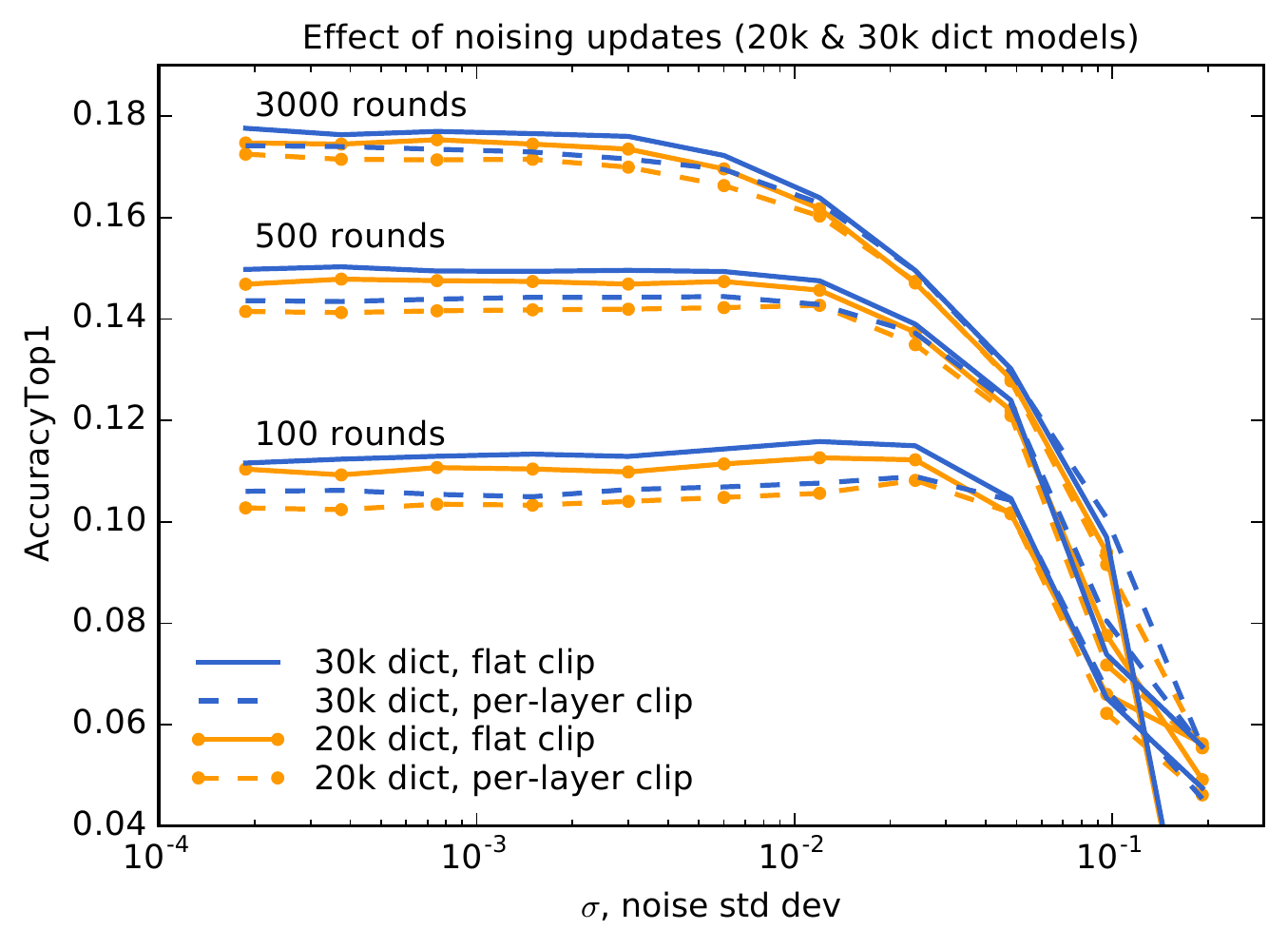} \\
   \mycaptionof{figure}{Effect of noised updates on models with larger dictionaries, when clipped at $S=20$.}
   \label{fig:noise_20k_30k}
  \end{center}
\end{minipage}
\end{figure}

\paragraph{Other experiments}
We experimented with adding an explicit $\ltwo$ penalty on the model
updates (not the full model) on each user, hoping this would
decrease the need for clipping by preferring updates with a smaller
$\ltwo$ norm. However, we saw no positive effect from this.

\end{document}